\documentclass{article}

\usepackage[preprint]{neurips_2026}

\usepackage[utf8]{inputenc} 
\usepackage[T1]{fontenc}    
\usepackage{hyperref}       
\usepackage{url}            
\usepackage{booktabs}       
\usepackage{graphicx}       
\usepackage{subcaption}     
\usepackage{amsfonts}       
\usepackage{amsmath}        
\usepackage{amssymb}        
\usepackage{amsthm}         
\usepackage{nicefrac}       
\usepackage{microtype}      
\usepackage{xcolor}         
\usepackage{bm}             
\usepackage{mathtools}      

\usepackage{soul}

\newtheorem{theorem}{Theorem}[section]
\newtheorem{lemma}[theorem]{Lemma}
\newtheorem{corollary}[theorem]{Corollary}

\theoremstyle{definition}
\newtheorem{definition}[theorem]{Definition}
\newtheorem{assumption}[theorem]{Assumption}
\theoremstyle{remark}
\newtheorem{remark}[theorem]{Remark}

\DeclareMathOperator*{\argmin}{arg\,min}

\DeclareMathOperator{\tr}{tr}
\DeclareMathOperator{\Var}{Var}

\newcommand{\Reals}{\mathbb{R}}
\newcommand{\Expect}{\mathbb{E}}

\newcommand{\Ind}{\mathbf{1}}
\newcommand{\cH}{\mathcal{H}}
\newcommand{\cD}{\mathcal{D}}
\newcommand{\cG}{\mathcal{G}}
\newcommand{\cL}{\mathcal{L}}
\newcommand{\cN}{\mathcal{N}}
\newcommand{\cP}{\mathcal{P}}
\newcommand{\cS}{\mathcal{S}}
\newcommand{\bx}{\bm{x}}
\newcommand{\bw}{\bm{w}}
\newcommand{\by}{\bm{y}}
\newcommand{\be}{\bm{e}}
\newcommand{\bE}{\bm{E}}
\newcommand{\bW}{\bm{W}}
\newcommand{\bell}{\bm{\ell}}
\newcommand{\bdelta}{\bm{\delta}}
\newcommand{\beps}{\bm{\epsilon}}
\newcommand{\bSigma}{\bm{\Sigma}}
\newcommand{\btheta}{\bm{\theta}}
\newcommand{\boldeta}{\bm{\eta}}
\newcommand{\norm}[1]{\left\|#1\right\|}

\newcommand{\inner}[2]{\langle #1,\, #2 \rangle}

\title{Decomposing MXFP4 quantization error for LLM reinforcement learning: reducible bias, recoverable deadzone, and an irreducible floor}

\author{%
  Shi-Liang (Bruce) Wu\thanks{Equal contribution.} \quad
  Xiao-Can (Bruce) Li\footnotemark[1]\hspace{0.5em}\thanks{Corresponding Author Email: \texttt{hsiaotsan.li@alumni.utoronto.ca}} \quad
  Zheng Shen \\
  Huawei Canada
}

\begin{document}

\maketitle

\begin{abstract}
MXFP4 arithmetic can dramatically accelerate reinforcement learning (RL) post-training of large language models (LLMs), yet the quantization error introduces severe accuracy degradation. Existing work treats the quantization error as a monolithic noise term, missing the distinct mechanisms upon interpreting how quantization error damages training. We prove an exact three-way  decomposition of quantization error and show how each component dominates a distinct RL training pathway. Our theoretical and empirical analysis decomposes the MXFP4 quantization error into three additive components: \emph{scale bias} from power-of-two rounding, \emph{deadzone truncation} from zeroing small values, and \emph{grid noise} from rounding to the nearest 4-bit grid. 
Each component dominates a distinct RL failure mode: scale bias accumulates multiplicatively through the backward pass, affecting gradient accuracy; deadzone truncation degrades rollout quality; and grid noise raises the policy's entropy. We combine corrections that are RL failure mode-targeted but not component-exclusive: Macro-block scaling to reduce scale bias, Outlier Fallback  recovers deadzone entries, but also partially reduces scale bias induced error, and Adaptive Quantization Noise (AQN) for controlling the policy entropy.
 On Qwen2.5-3B dense and Qwen3-30B-A3B-Base mixture-of-experts model, the targeted corrections match BF16 accuracy within $-0.7$\,pp and \emph{exceed} BF16 by $+1.0$\,pp respectively.
\end{abstract}

\section{Introduction}
\label{sec:intro}

RL post-training turns base LLMs into capable reasoners~\citep{shao2024deepseekmath,yu2025dapo}, but the compute cost is steep: rollout generation alone can dominate the training budget. The MXFP4 format~\citep{rouhani2023ocp} has 4-bit elements with a shared E8M0 scale per block of 32, offering up to $4\times$ throughput and $4\times$ memory reduction, with native support on NVIDIA Blackwell, AMD MI350, Huawei Ascend 950, and other accelerators. However, simply replacing BF16 with MXFP4 during RL post-training produces a large accuracy gap. To restore the accuracy, quantization-aware training \cite{liu2024llmqat} and post-training quantization \cite{lin2024awq, frantar2023gptq} techniques are employed to mitigate this degradation. These works treat quantization error as a single noise source.

Different from how previous works view the quantization error, our novel perspective  decomposes the MXFP4 quantization error into three additive error components induced by: 
(1) \textit{Scale bias}: the E8M0 scale overshoots the ideal scale (E8M$\infty$) by ${\sim}44\%$ on average, an bias that accumulates multiplicatively across layers in backward pass, affecting the gradient accuracy. This error can be reduced by improving scale precision.
(2) \textit{Deadzone truncation}: absolute values below $\frac{0.5}{2}\div 6 = 1/24$ of the block absolute maximum are zeroed, pruning 9\% of weights and degrading rollout quality. The weight values zeroed by the deadzone are recoverable by a known technique called Outlier Fallback (OF) \cite{zhang2025fallback}.
(3) \textit{Grid noise}: the coarse E2M1 grid adds zero-mean rounding noise that raises the policy's effective temperature, increasing exploration. This component is \emph{invariant to scale precision}, as it only depends on weights and the E2M1 grid, not the block scale. Therefore, once the precision format is given, this error is irreducible.



\textbf{Contributions.}
\begin{enumerate}
\item We provide a structural decomposition of MXFP4 quantization error into three additive components: scale bias (from power-of-two rounding), deadzone truncation (from zeroing small values), and grid noise (from rounding to the 4-bit grid). We demonstrate that the grid component is invariant to scale precision, establishing an irreducible error floor even with optimal scaling.
\item A \textit{dominance analysis} mapping each error component to a specific RL failure mode: scale bias causes inaccurate gradient, deadzone causes rollout degradation in forward pass, and grid noise increases the policy entropy.
\item Rather than treating quantization error as monolithic noise, we introduce specific corrections for each failure mode: Macro-block scaling to mitigate scale bias, Outlier Fallback to address deadzone truncation, and Adaptive Quantization Noise to control policy entropy.
\item We validate our targeted corrections on both dense (Qwen2.5-3B) and mixture-of-experts (Qwen3-30B-A3B-Base) models during RL post-training. On dense, MBS+AQN+OF recovers BF16 accuracy within 0.7 percentage points; on MoE, AQN+MBS+OF exceeds the BF16 baseline by 1.0 percentage point.
\end{enumerate}

\section{Related work}
\label{sec:related}

\textbf{Post-training quantization (PTQ).}
Early LLM quantization focused on integer formats: LLM.int8()~\citep{dettmers2022llmint8} and ZeroQuant~\citep{yao2022zeroquant} demonstrated INT8 inference with round-to-nearest over weight groups. GPTQ~\citep{frantar2023gptq} pushed to INT4 via second-order weight adjustments, and AWQ~\citep{lin2024awq} introduced activation-aware scaling. SqueezeLLM~\citep{kim2024squeezellm} and SpQR~\citep{dettmers2024spqr} handle outliers with mixed dense-sparse representations. These methods assume uniform integer grids and relatively large group sizes; MXFP4's non-uniform E2M1 grid and small block size (32) change the quantization dynamics fundamentally, as our decomposition reveals.

\textbf{Rotation-based and extreme compression.}
QuIP~\citep{chee2023quip}, QuIP\#~\citep{tseng2024quipsharp}, and QTIP~\citep{tseng2024qtip} use incoherence processing (random rotations, Hadamard transforms, trellis codes) to push compression to 2 bits with theoretical guarantees. SmoothQuant~\citep{xiao2023smoothquant} redistributes activation outliers to weights for INT8 quantization, while QuaRot~\citep{ashkboos2024quarot} and SpinQuant~\citep{liu2025spinquant} apply rotation matrices to eliminate outliers entirely. More recent rotation variants include grouped sequency-arranged rotations~\citep{choi2025rgsr}, pairwise rotations for reasoning-LLM inference~\citep{liang2025paroquant}, and MR-GPTQ~\citep{egiazarian2026mrgptq} which fuses micro-rotation into GPTQ-style PTQ specifically for MXFP4/NVFP4 with an accompanying CUTLASS-based kernel library (QuTLASS). These techniques operate \emph{upstream} of quantization by reshaping the weight/activation distribution. In the language of our decomposition (Section~\ref{sec:decomposition}), they primarily reduce grid and deadzone error by making the input more amenable to the quantizer. Our corrections instead operate \emph{during or after} quantization and are complementary.

\textbf{Quantization-aware training (QAT).}
LLM-QAT~\citep{liu2024llmqat} performs data-free QAT by using the pre-trained model to generate training data. QLoRA~\citep{dettmers2023qlora} combines NF4 weight quantization with LoRA adapters for memory-efficient fine-tuning. QeRL~\citep{huang2025qerl} applies LoRA-based QAT specifically to RL post-training, demonstrating that quantization noise can enhance exploration. Our work differs from QeRL in two ways: we use full-parameter updates (no LoRA), providing cleaner causal evidence, and we ground the exploration benefit in our error decomposition. Specifically, the grid noise component acts as a natural entropy regularizer.

\textbf{Low-precision FP4/FP8 training.}
A growing line of work targets training (not just inference) in 4--8 bit floating-point formats. On the FP8 side, \citet{micikevicius2022fp8} introduced the E4M3/E5M2 formats and~\citet{fishman2025fp8scaling,hernandezcano2025fp8gemm} demonstrated trillion-token-scale FP8 LLM pre-training, identifying activation-outlier instability as the central obstacle. On the FP4 side, beyond the original MXFP4 spec~\citep{rouhani2023ocp} and the MXFP4-RL pre-training of~\citet{tseng2025trainingllmsmxfp4}, recent work targets native FP4 training (Quartet~\citep{castro2025quartet}, FP4 All The Way~\citep{chmiel2025fp4allway}), NVFP4 with finer scale precision~\citep{nvidia2025nvfp4,panferov2026quartet2,cook2025fouroversix}, and MoE-specific FP4 training on Hopper~\citep{zhang2026practicalfp4moe}. Outlier-Safe Pre-Training~\citep{park2025osp} attacks the same problem from the pre-training side by preventing outliers from forming in the first place. \citet{hao2025lowprectraining} provide a recent survey covering both fixed-point and floating-point low-precision training. Our contribution is orthogonal to these training-time format choices: we target RL post-training of an existing BF16 checkpoint and explain how the static MXFP4 error structure maps onto specific RL failure modes.

\begin{table}[t]
  \caption{Empirical error decomposition across model scales. All three models show the same structure: scale and grid are anti-correlated ($\cos \approx -0.66$); deadzone is exactly orthogonal to both; the grid component is ${\sim}71\%$ of total MSE and invariant to model scale.}
  \label{tab:decomp_empirical}
  \centering
  \small
  \begin{tabular}{lcccccc}
    \toprule
    Model & $\frac{\norm{\be^{\mathrm{scale}}}^2}{\norm{\be}^2}$ & $\frac{\norm{\be^{\mathrm{DZ}}}^2}{\norm{\be}^2}$ & $\frac{\norm{\be^{\mathrm{grid}}}^2}{\norm{\be}^2}$ & $\frac{2\langle\be^{\mathrm{scale}},\be^{\mathrm{grid}}\rangle}{\norm{\be}^2}$ & $\cos$ & \# tensors \\
    \midrule
    Qwen2.5-3B  & 1.725 & 0.026 & 0.703 & $-1.454$ & $-0.657$ & 252 \\
    Qwen3-30B-A3B-Base & 1.726 & 0.022 & 0.712 & $-1.460$ & $-0.658$ & 18\,624 \\
    \bottomrule
  \end{tabular}
\end{table}
\begin{figure}[t]
  \centering
  \includegraphics[width=0.65\linewidth]{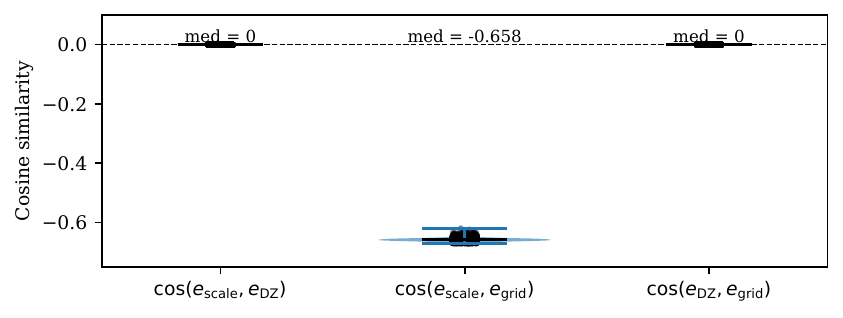}
  \caption{Pairwise error component cosine similarities across 18{,}624 weight tensors (Qwen3-30B-A3B-Base). DZ is exactly orthogonal to both scale and grid (point masses at 0); scale and grid are anti-correlated with $\cos \approx -0.66$ and minimal variance. This visualizes the structural relationships summarized in Table~\ref{tab:decomp_empirical}.}
  \label{fig:orthogonality}
\end{figure}
\begin{figure}[t]
  \centering
  \begin{subfigure}[t]{0.42\linewidth}
    \centering
    \includegraphics[width=\linewidth]{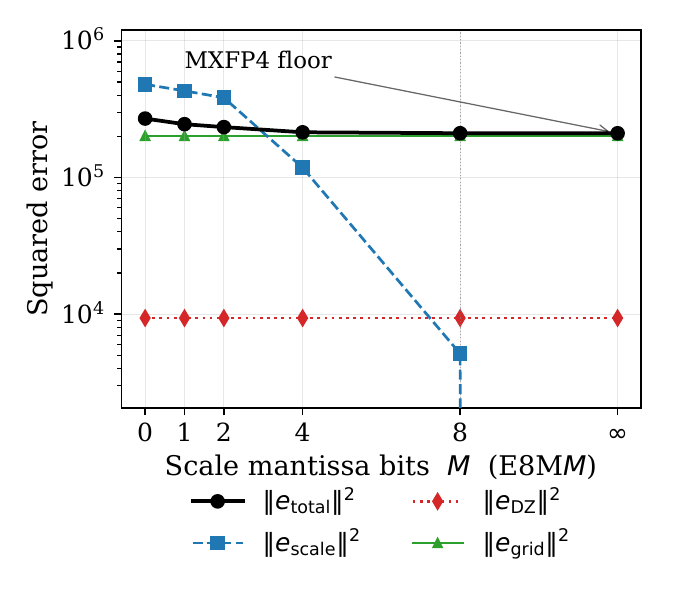}
    \caption{Scale mantissa precision sweep.}
    \label{fig:scale_sweep}
  \end{subfigure}
  \hfill
  \begin{subfigure}[t]{0.56\linewidth}
    \centering
    \includegraphics[width=\linewidth]{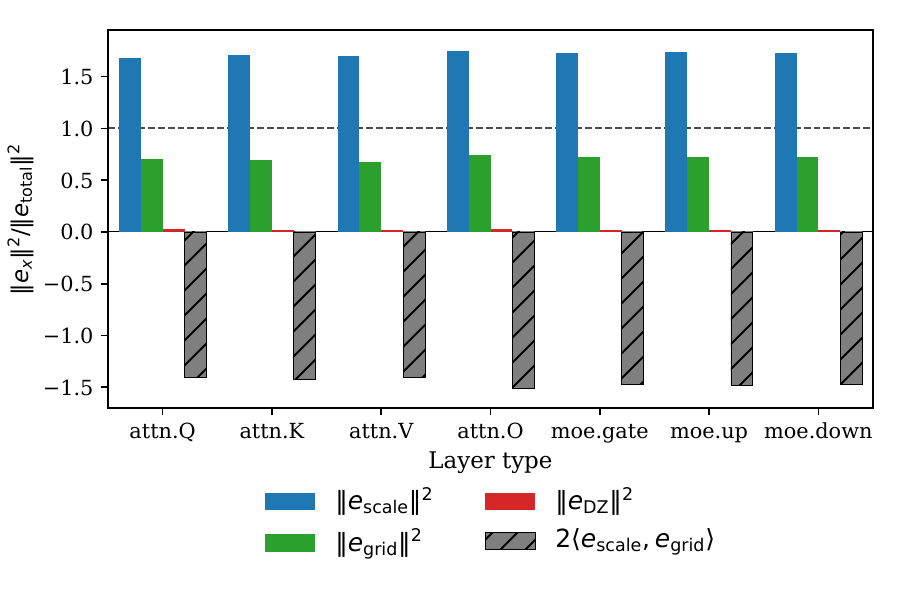}
    \caption{Per-layer-type error decomposition (Qwen3-30B-A3B-Base).}
    \label{fig:decomp}
  \end{subfigure}
  \caption{Error decomposition analysis. (a)~Improving scale precision drives total error to the irreducible floor $\norm{\be^{\mathrm{grid}}}^2 + \norm{\be^{\mathrm{DZ}}}^2$ without affecting grid error. (b)~All layer types share nearly same decomposition: $\norm{\be^{\mathrm{scale}}}^2 \approx 1.72\norm{\be}^2$, $\norm{\be^{\mathrm{grid}}}^2 \approx 0.71\norm{\be}^2$, $\norm{\be^{\mathrm{DZ}}}^2 \approx 0.02\norm{\be}^2$.}
  \label{fig:error_analysis}
\end{figure}

\section{MXFP4 quantization error decomposition}
\label{sec:decomposition}

\subsection{Preliminaries and decomposition}
\label{sec:prelim}

The MXFP4 format (OCP MX standard) quantizes a tensor $\bx\in\Reals^n$ block-wise with block size $B{=}32$. Each block $\bx_b = (x_{b,1},\ldots,x_{b,B})\in\Reals^B$ is quantized in two steps: \textbf{(i)} a shared E8M0 scale $s_b = 2^{\lceil\log_2(\max_i|x_{b,i}|/q_{\max})\rceil}$ and \textbf{(ii)} per-element rounding to the E2M1 grid $\cG = \{0, \pm0.5, \pm1, \pm1.5, \pm2, \pm3, \pm4, \pm6\}$ ($q_{\max}{=}6$), giving a \textit{dequantized} value $Q(x_{b,i}) = s_b \cdot \argmin_{g\in\cG} |x_{b,i}/s_b - g|$.

We introduce the \emph{ideal-scale quantizer} $Q^*$ that uses the unquantized scale $s_b^* = \max_i|x_{b,i}|/q_{\max}$ instead of the E8M0 $s_b$. This separates scale quantization from element-level quantization. The \emph{deadzone} $\cD_b$ is the set of elements satisfying $|x_{b,i}| < m_b/24$ (where $m_b = \max_j|x_{b,j}|$), which are rounded to zero; empirically 9\% of values per block.

\begin{definition}[Error components]
\label{def:error_components}
The total quantization error $e_{b,i} = Q(x_{b,i}) - x_{b,i}$ decomposes as
\begin{equation}
\label{eq:decomp}
  e_{b,i}
  = \underbrace{Q(x_{b,i}) - Q^*(x_{b,i})}_{e_{b,i}^{\mathrm{scale}}}
  + \underbrace{[Q^*(x_{b,i}) - x_{b,i}]\, \Ind_{\cD_b}(i)}_{e_{b,i}^{\mathrm{DZ}}}
  + \underbrace{[Q^*(x_{b,i}) - x_{b,i}]\, \Ind_{\cD_b^c}(i)}_{e_{b,i}^{\mathrm{grid}}}\,,
\end{equation}
where $\Ind_{\cD_b}(i) = \Ind[|x_{b,i}/s_b^*| < q_{\min}/2]$ is the deadzone indicator ($q_{\min}{=}0.5$). Since $\Ind_{\cD_b} + \Ind_{\cD_b^c} = 1$, the decomposition is exact: $\be = \be^{\mathrm{scale}} + \be^{\mathrm{DZ}} + \be^{\mathrm{grid}}$.
\end{definition}

\subsection{Interactions among three error components}
\label{sec:identity}

\begin{lemma}[Exact orthogonality: DZ $\perp$ Scale and DZ $\perp$ Grid]
\label{lem:exact}
$\inner{\be^{\mathrm{DZ}}}{\be^{\mathrm{scale}}} = \inner{\be^{\mathrm{DZ}}}{\be^{\mathrm{grid}}} = 0$ (pointwise, no assumptions needed).
\end{lemma}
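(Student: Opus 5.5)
The plan is to prove both identities \emph{pointwise}: for every block $b$ and index $i$ I will show that $e^{\mathrm{DZ}}_{b,i}e^{\mathrm{scale}}_{b,i}=0$ and $e^{\mathrm{DZ}}_{b,i}e^{\mathrm{grid}}_{b,i}=0$. Summing over $(b,i)$ then gives the two inner-product identities, with no distributional assumptions.

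\textbf{DZ $\perp$ Grid.} This part is immediate from Definition~\ref{def:error_components}. Both components carry the common factor $Q^*(x_{b,i})-x_{b,i}$, multiplied by $\Ind_{\cD_b}(i)$ and $\Ind_{\cD_b^c}(i)$ respectively. Since $\Ind_{\cD_b}(i)\,\Ind_{\cD_b^c}(i)=0$ for every $i$, each product term vanishes identically.

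\textbf{DZ $\perp$ Scale.} I will show that $e^{\mathrm{scale}}_{b,i}=0$ whenever $i\in\cD_b$; outside $\cD_b$ we already have $e^{\mathrm{DZ}}_{b,i}=0$. The key steps are as follows.
\begin{enumerate}
\item Fix $i\in\cD_b$, so $|x_{b,i}/s_b^*|<q_{\min}/2=0.25$. The nearest point of $\cG$ to $x_{b,i}/s^*_b$ is then $0$, because the nearest nonzero grid points are $\pm0.5$ and the inequality is strict, so no tie arises. Hence $Q^*(x_{b,i})=0$.
\item Compare the two scales. Since $\lceil\cdot\rceil\ge\cdot$, we have $s_b=2^{\lceil\log_2 s_b^*\rceil}\ge s_b^*$. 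Therefore $|x_{b,i}/s_b|\le|x_{b,i}/s_b^*|<0.25$.
\item By the same rounding argument as in step 1, $Q(x_{b,i})=0$.
\item Combining steps 1 and 3 gives $e^{\mathrm{scale}}_{b,i}=Q(x_{b,i})-Q^*(x_{b,i})=0$.
\end{enumerate}
Thus on $\cD_b$ the scale component vanishes, and on $\cD_b^c$ the deadzone component vanishes, so their product is zero at every index.

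The only delicate point, and the one I expect to be the main obstacle, is the monotonicity in step 2: the E8M0 scale always rounds \emph{up}, so the deadzone of $Q$ contains that of $Q^*$. This is exactly where the ceiling in the scale definition matters; a round-to-nearest scale could shrink the deadzone and break the argument. I will also dispose of the degenerate all-zero block ($m_b=0$) by convention. In that case every element is $0$, so all three components vanish and both identities hold trivially.
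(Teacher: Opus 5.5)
Your proposal is correct and follows the paper's proof essentially verbatim: disjoint supports give DZ $\perp$ grid, and the ceiling property $s_b \ge s_b^*$ forces both $Q$ and $Q^*$ to return zero on deadzone elements, so $e^{\mathrm{scale}}$ vanishes there. Your remarks that the strict inequality rules out ties at $\pm 0.25$, and your handling of the degenerate all-zero block, are small refinements the paper omits.
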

\begin{proof}
$\be^{\mathrm{DZ}}$ and $\be^{\mathrm{grid}}$ have disjoint support by construction ($\Ind_{\cD_b}\cdot\Ind_{\cD_b^c} = 0$). For $\be^{\mathrm{DZ}} \perp \be^{\mathrm{scale}}$: on deadzone elements, $|x_i/s_b^*| < q_{\min}/2$, so $Q^*(x_i) = 0$. Since $s_b \geq s_b^*$ (ceiling rounding), $|x_i/s_b| \leq |x_i/s_b^*| < q_{\min}/2$, so $Q(x_i) = 0$ as well. Therefore $e_i^{\mathrm{scale}} = Q(x_i) - Q^*(x_i) = 0$ on every deadzone element.
\end{proof}

Expanding $\norm{\be}^2 = \norm{\be^{\mathrm{scale}} + \be^{\mathrm{DZ}} + \be^{\mathrm{grid}}}^2$ and applying Lemma~\ref{lem:exact} to eliminate two of three cross terms yields the MSE identity:
\begin{equation}
\label{eq:identity}
  \boxed{\norm{\be}^2
  = \norm{\be^{\mathrm{scale}}}^2
  + \norm{\be^{\mathrm{DZ}}}^2
  + \norm{\be^{\mathrm{grid}}}^2
  + 2\inner{\be^{\mathrm{scale}}}{\be^{\mathrm{grid}}}}\,,
\end{equation}
with exactly one surviving cross term.

\begin{remark}[Anti-correlation between scale and grid]
\label{rem:anticorr}
We observe $\cos(\be^{\mathrm{scale}}, \be^{\mathrm{grid}}) \approx -0.66$ consistently across 18{,}876 weight tensors in two model scales (Table~\ref{tab:decomp_empirical} and Figure \ref{fig:orthogonality}). The negative sign is expected: since $s_b \geq s_b^*$ (ceiling rounding), the quantized scale overshoots the ideal scale, meaning $Q$ uses a coarser grid than $Q^*$. Where $Q^*$ rounds a value upward (positive $e_i^{\mathrm{grid}}$), the coarser $Q$ grid tends to round to the same or a lower point (negative $e_i^{\mathrm{scale}}$), creating systematic sign opposition. The magnitude ${\approx}0.66$ is an empirical constant that we do not derive in closed form. Note that component norms sum to ${\approx}2.46\norm{\be}^2$, exceeding the total; the cross term $2\inner{\be^{\mathrm{scale}}}{\be^{\mathrm{grid}}} \approx -1.46\norm{\be}^2$ accounts for the difference. This has a practical consequence: MBS reduces both $\norm{\be^{\mathrm{scale}}}^2$ (172\% of total) \emph{and} the negative cross term, reducing total MSE to $\norm{\be^{\mathrm{grid}}}^2 + \norm{\be^{\mathrm{DZ}}}^2 \approx 0.73\norm{\be}^2$, a 27\% reduction.
\end{remark}

\subsection{Grid noise is invariant to scale precision}
\label{sec:invariance}

The grid error $e_i^{\mathrm{grid}}$ on non-deadzone elements depends only on the weight values $x_i$ and the ideal scale $s_b^*$, \emph{not on the actual E8M0 scale $s_b$}. Changing the scale precision (e.g., from E8M0 to E8M8 via MBS) modifies $\be^{\mathrm{scale}}$ but leaves $\be^{\mathrm{grid}}$ completely unchanged. As scale precision improves:
\begin{equation}
\label{eq:floor}
  \norm{\be^{\mathrm{scale}}} \to 0, \quad
  \inner{\be^{\mathrm{scale}}}{\be^{\mathrm{grid}}} \to 0, \quad
  \norm{\be^{\mathrm{total}}}^2 \to
  \underbrace{\norm{\be^{\mathrm{grid}}}^2 + \norm{\be^{\mathrm{DZ}}}^2}_{\text{irreducible floor}}\,.
\end{equation}
This floor is the structural limit of MXFP4: to go below it, the E2M1 grid itself must change. Figure~\ref{fig:scale_sweep} visualizes this convergence.

\section{Impact on RL training}
\label{sec:impact}

Each error component affects all three RL training pathways (policy gradient accuracy, rollout quality, and exploration--exploitation balance), but the mathematical structure of each creates a \textit{dominant} effect on one pathway. We analyze these below; the full $3 \times 3$ analysis is in Appendix~\ref{app:pathway_details}.

\begin{figure}[t]
  \centering
  \includegraphics[width=0.95\linewidth]{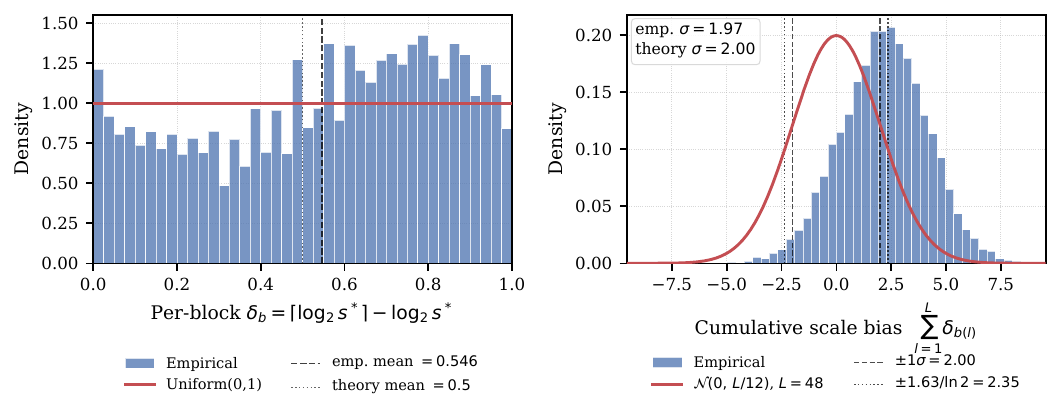}
  \caption{Scale bias from E8M0 scale rounding (Qwen3-30B-A3B-Base, $L{=}48$). (a)~Per-block rounding residual $\delta_b$ is approximately Uniform(0,1) with mean 0.546. (b)~Cumulative scale bias $\sum_l \delta_l$ across layers matches the theoretical standard deviation with empirical $\sigma = 1.97 \approx \sigma_{\mathrm{theory}} = 2.0$, with empirical mean near 2.0 due to imperfect Uniform distribution, validating the gradient inaccuracy in Section~\ref{sec:pathway_gradient}.}
  \label{fig:grad_fluct}
\end{figure}

\subsection{Scale bias: dominant effect on policy gradients}
\label{sec:pathway_gradient}

Scale bias is \emph{multiplicative}: each layer's output is scaled by $\gamma_b = s_b/s_b^*$. In the forward pass, LayerNorm resets activation magnitudes at every layer, preventing accumulation, so the per-layer forward-pass error is $O(\gamma_b {-} 1) \approx O(0.44)$, calculation shown in text after Eq. (\ref{eq:grad_log}) . In the backward pass, however, the straight-through estimator (STE) chain rule multiplies scale factors across all $L$ layers without any normalization. Let $\hat{\nabla}$ denote the gradient computed under quantization and $\nabla_{\mathrm{true}}$ the full-precision gradient. Their log magnitude ratio accumulates as:
\begin{equation}
\label{eq:grad_log}
  \log \frac{\norm{\hat{\nabla}}}{\norm{\nabla_{\mathrm{true}}}}
  \approx \sum_{l=1}^L \delta_{b(l)}\,,
  \quad \delta_b \sim \mathrm{Uniform}[0,\, 1)\,,
\end{equation}
where $\delta_b = \lceil\log_2 s_b^*\rceil - \log_2 s_b^*$ is the E8M0 ceiling-rounding error in log-space, approximately uniform by a Benford's-law argument (Appendix~\ref{app:proof_lemma2}, Step~1). Under ceiling rounding, $\gamma_b = 2^{\delta_b}$ with $\delta_b \sim \mathrm{Uniform} [0,1)$, giving $\mathrm{mean}(\gamma-1) = 1/\mathrm{ln}2 - 1 \approx 0.44$ and $\mathrm{RMSE}(\gamma-1) \approx 0.57$.
By the CLT, the centred sum $\sum_l (\delta_{b(l)} - \tfrac{1}{2}) \sim \cN(0, L/12)$. For Qwen2.5-3B with $L{=}36$ layers, $\mathrm{std} = \sqrt{3} \approx 1.73$, so the gradient magnitude ratio $\norm{\hat{\nabla}}/\norm{\nabla_{\mathrm{true}}}$ ranges from ${\sim}0.18\times$ to ${\sim}5.6\times$ within one standard deviation. This \textbf{exponential amplification in the backward pass}, absent in the forward pass, is what makes scale bias primarily a gradient problem. For the 48-layer MoE model (Qwen3-30B-A3B-Base), $\mathrm{std}_{\mathrm{theory}} = \sqrt{48/12} = 2.0$; we measure $\mathrm{std}_{\mathrm{emp}} = 1.97$ (Figure~\ref{fig:grad_fluct}). MBS corrects this by adding an 8-bit mantissa to the block scale, reducing $\Var(\gamma)$ by ${\sim}(256)^{-2}$.

\emph{Why not the other pathways?} Scale bias does perturb rollout logits and exploration at $O(\gamma{-}1) \approx 49\%$ per layer, but LayerNorm re-normalizes inputs between layers, preventing multiplicative compounding in the forward pass.

\subsection{Deadzone: dominant effect on rollout distribution}
\label{sec:pathway_reward}

Deadzone truncation is \emph{pruning}: it affects RL training by deterministically setting weights to zeros in the forward pass, changing the rollout distribution. On Qwen3-30B-A3B-Base model, $9.0\%$ of
 elements fall in this ideal deadzone and contribute $2.2\%$ of            
$\|e_{\rm total}\|^2$.

\emph{Why not gradients?} The STE backward pass \textbf{ignores the deadzone}: it passes gradients through as if the full-precision value were present. So deadzone damage is fully visible in the forward pass (degraded rollouts) but largely invisible in the backward pass (intact gradients). This asymmetry makes deadzone primarily a rollout quality problem.

\emph{Why not exploration?} Deadzone is \textbf{systematic and contractive} ($e_i^{\mathrm{DZ}} = -x_i$, always toward zero), not stochastic. It removes capacity rather than adding randomness, producing blander outputs (lower reward mean) rather than more diverse ones. In our dense experiments, OF recovers deadzone values (DZ rate: 9\% $\to$ 2\%), yielding $+$17.5\,pp.

\subsection{Grid noise: dominant effect on exploration}
\label{sec:pathway_exploration}

Grid noise is the only component that is \textbf{dense and approximately zero-mean}. A zero-mean perturbation to logits does not shift the mode of the output distribution; it \emph{widens} it, acting approximately as temperature scaling. Under additive noise $\boldeta \sim \cN(\bm{0}, \sigma_\eta^2\bm{I})$ on logits, the effective temperature is
\begin{equation}
\label{eq:eff_temp}
  T_{\mathrm{eff}} \approx \sqrt{1 + 2\sigma_\eta^2 / \Var(\Delta\ell)} > 1\,,
\end{equation}
where $\sigma_\eta^2 \propto \sum_l \sigma_{\delta,l}^2(\mathrm{grid})$ accumulates over layers. The noise level is \emph{constant} throughout training, providing no mechanism for annealing. Grid noise adds variance but no systematic bias to gradients (far milder than the $5.6\times$ scale-induced fluctuation), and its effect on reward quality averages out over full response sequences. Detailed explanation for Eq. (\ref{eq:eff_temp}) can be found in Appendix \ref{app:eff_temp}.

\section{Method}
\label{sec:method}

Guided by the decomposition in Section~\ref{sec:decomposition}, we adopt two error corrections and one training recipe. \textbf{MBS}~\citep{chhugani2026mxfp4} and \textbf{OF}~\citep{zhang2025fallback} are static error corrections: MBS reduces the scale component, OF recovers the deadzone component, together driving total error toward the irreducible grid floor. \textbf{AQN}~\citep{huang2025qerl} does not reduce any static error; it is an RL training recipe that helps the model learn effectively on this floor. We repurpose MBS and OF (originally designed for inference) for RL training with quantize-dequantize (QDQ) emulation, and adapt AQN with modifications for full-parameter training.

\textbf{QDQ emulation.}
To ensure compatibility, experiments are performed using simulated quantization/dequantization in PyTorch, the standard methodology for MXFP4 training research~\citep{tseng2025trainingllmsmxfp4,chen2025oscillation}, as native W4A4 tensor cores are not yet widely accessible. QDQ is numerically faithful: the quantization error is format-identical in both emulation and hardware-native execution (same E2M1 grid, same E8M0 block scale, FP32 accumulation). End-to-end latency on native accelerators is complementary future work that does not affect the accuracy results here.

\subsection{Macro Block Scaling (MBS): reducing scale bias toward the grid floor}
\label{sec:mbs}

Recall from Section~\ref{sec:prelim} that the E8M0 shared exponent $s_b = 2^e$ is a pure power of two, introducing a scale ratio $\gamma_b = s_b/s_b^*$ with ${\sim}54\%$ error. The root cause is that E8M0 has \emph{zero mantissa bits}: it can only represent scales at powers of two, wasting up to half the dynamic range.

MBS~\citep{chhugani2026mxfp4} compensates for this by computing an 8-bit mantissa correction at a coarser \emph{macro-block} granularity. Consider a macro-block of $B_M = 128$ elements, spanning four contiguous MXFP4 blocks of $B = 32$. The macro-block scale is factored as
\begin{equation}
\label{eq:mbs_factor}
  s^{\mathrm{MBS}} = 2^{e_M} \cdot (1 + m_{\mathrm{MBS}})\,,
\end{equation}
where $e_M$ is the macro-block exponent and $m_{\mathrm{MBS}} \in [0, 1)$ is an 8-bit mantissa (E0M8: 0 exponent bits, 8 mantissa bits, complementary to the E8M0 block scale; 256 levels). Since the per-block E8M0 scales already capture the exponent, MBS only needs to store the mantissa correction $m_{\mathrm{MBS}}$, a single byte per 128 elements, adding negligible memory overhead ($< 0.1$ bits/element).

\textbf{Prescale--quantize--postscale.}
In the QDQ emulation used during training, MBS wraps the standard MXFP4 quantization:
\begin{equation}
\label{eq:mbs_qdq}
  \hat{x}_i = \frac{1}{1 + m_{\mathrm{MBS}}} \cdot Q\!\bigl((1 + m_{\mathrm{MBS}}) \cdot x_i\bigr)\,,
\end{equation}
where $Q(\cdot)$ is the standard MXFP4 block quantizer (Definition~\ref{def:error_components}). The prescale $(1 + m_{\mathrm{MBS}})$ shifts the input distribution so that the power-of-two block scale $s_b$ more closely matches the ideal scale $s_b^*$; the postscale restores the original magnitude. In hardware GEMM, the prescale and postscale fuse into the tile-level epilogue at negligible cost~\citep{chhugani2026mxfp4}. The same adaptive-block-scaling idea has concurrently been proposed for NVFP4~\citep{cook2025fouroversix}, demonstrating that scale-precision correction is broadly useful across MX-family formats.

\textbf{Effect on error budget.}
With MBS, the effective scale ratio becomes $\gamma_b^{\mathrm{MBS}} = s_b / [s_b^* \cdot (1 + m_{\mathrm{MBS}})]$. The 8-bit mantissa reduces the scale quantization step from 1 (pure power-of-two) to $1/256$, shrinking $\Var(\gamma)$ by a factor of ${\sim}(256)^{-2} \approx 1.5 \times 10^{-5}$. By the grid-invariance property (Section~\ref{sec:invariance}), $\be^{\mathrm{grid}}$ is completely unaffected. As $\norm{\be^{\mathrm{scale}}} \to 0$, the cross term $\inner{\be^{\mathrm{scale}}}{\be^{\mathrm{grid}}} \to 0$ as well, and the total MSE converges to the irreducible floor $\norm{\be^{\mathrm{grid}}}^2 + \norm{\be^{\mathrm{DZ}}}^2$.

\subsection{Outlier Fallback (OF): recovering deadzone values}
\label{sec:of}

The deadzone truncates all values with $|x_i| < m_b/24$ to zero, destroying 9\% of values per block (Section~\ref{sec:pathway_reward}). Unlike scale bias, which can be corrected by adjusting the scale factor, deadzone is an \emph{information loss}: once a value is mapped to zero, no rescaling can recover it. OF addresses this through a two-pass residual quantization, adapting the block-level fallback technique of \citet{zhang2025fallback} from INT8 to MXFP4.

\textbf{Two-pass QDQ.}
Let $Q(\cdot)$ denote the standard MXFP4 block quantizer. OF replaces the single-pass $Q(\bx)$ with
\begin{equation}
\label{eq:of_result}
  \hat{\bx}_1 = Q(\bx),\quad
  \hat{\bx}_2 = Q(\bx - \hat{\bx}_1),\quad
  \hat{\bx}_{\mathrm{OF}} = \hat{\bx}_1 + \alpha\,\hat{\bx}_2,
\end{equation}
where $\alpha\in[0,1]$ blends the residual. Pass~1 quantizes outliers accurately (they set the block scale) but maps small values to the deadzone; Pass~2 has small dynamic range, so the previously dead values become representable. We use $\alpha=0.5$ throughout: $\hat{\bx}_2$ is itself lossy, and adding it at half strength outperforms $\alpha=1.0$ on GSM8K by ${\sim}1$\,pp and also minimizes rollout--training drift (U-shaped in $\alpha$ with minimum at $0.5$; Appendix~\ref{app:of_alpha}).

\subsection{Complementary RL training recipe: Adaptive Quantization Noise}
\label{sec:aqn}

After MBS and OF reduce the scale and deadzone components, the residual error is dominated by $\be^{\mathrm{grid}}$, the irreducible floor of the MXFP4 format. AQN~\citep{huang2025qerl} does not reduce this static error; it is a complementary RL training technique that helps the model learn effectively on the grid noise floor. The idea of \emph{controlled} parameter noise to aid exploration has classical roots in NoisyNet~\citep{fortunato2018noisynet} and gradient-noise injection~\citep{neelakantan2015adding}; AQN specialises this lineage to the structured grid-noise floor of MXFP4. Before each rollout, AQN perturbs weights with Gaussian noise $\beps \sim \cN(\bm{0}, \sigma^2\bm{I})$ whose magnitude decays exponentially from $\sigma_{\mathrm{start}}$ to $\sigma_{\mathrm{end}}$ over $K$ stages, preventing the premature entropy collapse caused by constant grid noise (Section~\ref{sec:pathway_exploration}). We use $\sigma_{\mathrm{start}}{=}1\%$, $\sigma_{\mathrm{end}}{=}0.1\%$, $K{=}10$, applying noise to both LayerNorm and linear layer weights.

\begin{figure}[t]
  \centering
  \includegraphics[width=0.9\linewidth]{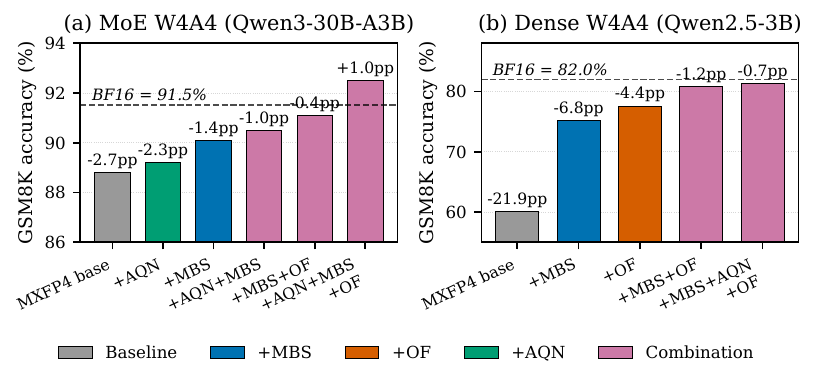}
  \caption{Ablation results on GSM8K. (a)~MoE: corrections stack incrementally; AQN+MBS+OF reaches $+$1.0\,pp (peak). (b)~Dense: OF is critical ($+$17.5\,pp alone); the full stack recovers to $-$0.7\,pp. Dashed line: BF16 baseline.}
  \label{fig:ablation}
\end{figure}

\section{Experiments}
\label{sec:experiments}

\subsection{Setup}
\label{sec:setup}

\textbf{Models and task.}
We evaluate on two architectures: Qwen2.5-3B (dense, 36 layers) and Qwen3-30B-A3B-Base (MoE, 48 layers, 3B active parameters). The task is GSM8K~\citep{cobbe2021gsm8k} math reasoning with verifiable rewards.

\textbf{Training.}
All runs use GRPO~\citep{shao2024deepseekmath} with batch size 64 and max response length 1024; the scaffolding builds on verl-VeRL alongside related RL-LLM frameworks~\citep{yu2025dapo,fu2025areal}. For MoE: Megatron parallelism, 2 samples per prompt. For dense: FSDP2, 4 samples per prompt. Full hyperparameter list in Appendix~\ref{app:hyperparams}.

\textbf{Quantization.}
MXFP4 W4A4 via QDQ emulation: both weights and activations quantized to E2M1 elements with E8M0 block scale (block size 32). Truncated Importance Sampling (TIS) \cite{yao2025tis} clips the importance ratio $\rho_i$ from above at $\rho_{\max}$ (i.e., $\min(\rho_i, \rho_{\max})$): we use TIS${}=2$ for MoE and TIS${}=5$ for dense. The larger threshold for dense is necessary because quantization-induced policy divergence concentrates importance ratios near the clipping boundary at TIS${}=2$, effectively zeroing out the learning signal (MBS and OF both cause training collapse at TIS${}=2$ on dense).

\begin{figure}[t]
  \centering
  \includegraphics[width=0.95\linewidth]{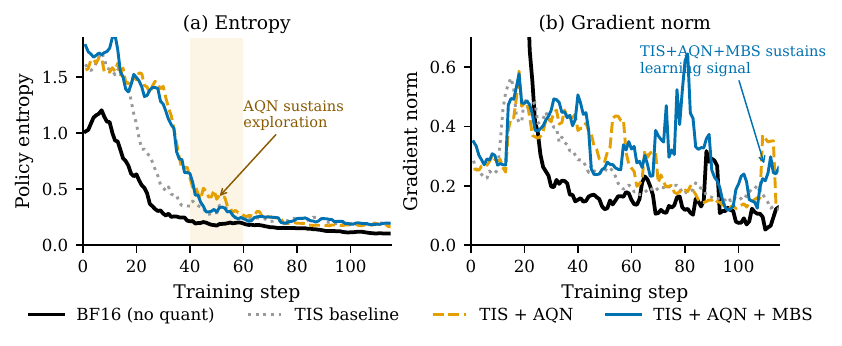}
  \caption{Training dynamics (MoE, GSM8K). (a)~AQN sustains policy entropy, preventing premature convergence. (b)~AQN+MBS maintains higher gradient norms, indicating continued learning. BF16 baseline shown for reference.}
  \label{fig:dynamics}
\end{figure}

\subsection{Main results}
\label{sec:main_results}

\begin{table}[t]
  \caption{MoE results (Qwen3-30B-A3B-Base, GSM8K, W4A4). BF16 baseline: 91.51\%.}
  \label{tab:moe}
  \centering
  \small
  \begin{tabular}{lccccc}
    \toprule
    Condition & AQN & MBS & OF & GSM8K (\%) & Gap \\
    \midrule
    MXFP4 baseline & & & & 88.8 & $-$2.7 \\
    +MBS & & \checkmark & & 90.1 & $-$1.4 \\
    +OF & & & \checkmark & 90.3 & $-$1.2 \\
    +AQN(1\%) & \checkmark & & & 89.2 & $-$2.3 \\
    +AQN+MBS & \checkmark & \checkmark & & 90.5 & $-$1.0 \\
    +MBS+OF & & \checkmark & \checkmark & 91.1 & $-$0.4 \\
    +AQN+MBS+OF & \checkmark & \checkmark & \checkmark & \textbf{92.49} & \textbf{$+$1.0} \\
    \bottomrule
  \end{tabular}
\end{table}

\begin{table}[!t]
  \caption{Dense results (Qwen2.5-3B, GSM8K, MXFP4 W4A4). BF16 baseline: 82.0\%.}
  \label{tab:dense}
  \centering
  \small
  \begin{tabular}{lccccc}
    \toprule
    Condition & MBS & AQN & OF & GSM8K (\%) & Gap \\
    \midrule
    MXFP4 baseline & & & & 60.1 & $-$21.9 \\
    +MBS & \checkmark & & & 75.2 & $-$6.8 \\
    +OF & & & \checkmark & 77.6 & $-$4.4 \\
    +MBS+OF & \checkmark & & \checkmark & 80.8 & $-$1.2 \\
    +MBS+AQN(1\%)+OF & \checkmark & \checkmark & \checkmark & \textbf{81.3} & $-$0.7 \\
    \bottomrule
  \end{tabular}
\end{table}

Tables~\ref{tab:moe}--\ref{tab:dense} present the main results. On the MoE model (BF16 baseline 91.51\%), AQN+MBS+OF ($\alpha\!=\!0.5$) reaches a peak of 92.49\%, exceeding BF16 by $+1.0$\,pp; MBS+OF (no AQN) closes the gap to $-0.4$\,pp (91.1\%). On dense, the na\"{\i}ve baseline suffers $-$21.9\,pp, but MBS+AQN+OF recovers to within 0.7\,pp of BF16 (81.3\% vs.\ 82.0\%). Figure~\ref{fig:ablation} visualizes these results.

 \textbf{Correction composition.}
  The three corrections compose near-additively on MoE: MBS alone
  $+$1.3\,pp, OF alone $+$1.5\,pp, AQN alone $+$0.4\,pp, versus
  $+$2.3\,pp combined. The slight subadditivity is consistent with
  partial overlap between MBS and OF on outlier-heavy blocks. On dense, the picture is strikingly
  different: OF alone ($+$17.5\,pp) dwarfs MBS ($+$15.1\,pp), and
  without OF, MBS recovers only 15\,pp of the 21.9\,pp gap.
  This asymmetry is predicted by the decomposition: dense models
  lack expert routing to compensate for deadzone pruning, making
  OF the critical correction.

\textbf{Dense vs.\ MoE dichotomy.}
OF provides $+$17.5\,pp on dense but only $+$1.5\,pp on MoE, because expert routing provides natural redundancy against deadzone pruning. Dense models have no such mechanism, making OF essential.

\textbf{AQN $\sigma$ sensitivity.}
On dense MXFP4 with MBS+OF, $\sigma_{\mathrm{start}} = 0.1\%$ is too weak (71.2\%), $2\%$ is slightly too strong (80.1\%), and $1\%$ is optimal (81.3\%). The optimal $\sigma_{\mathrm{start}}$ should be comparable to the per-element grid noise magnitude, which for MXFP4 is on the order of $1\%$ of the weight scale.

\textbf{Training dynamics.} \label{sec:dynamics}
Figure~\ref{fig:dynamics} shows that the MXFP4 baseline's entropy drops to 0.35 by step~50 (premature convergence), while AQN+MBS maintains 0.61. Gradient norms show the same pattern: baseline decays to 0.16 vs.\ 0.24 for AQN+MBS.

\section{Conclusion}
\label{sec:conclusion}
We presented a three-way decomposition of MXFP4 quantization error into scale bias, deadzone truncation, and grid noise, with distinct statistical signatures and distinct effects on RL training. Two formal results anchor it: (i)~exact orthogonality of the deadzone to both others (Lemma~\ref{lem:exact}), from the ceiling property $s_b \geq s_b^*$; and (ii)~invariance of grid noise to scale precision (Section~\ref{sec:invariance}), establishing an irreducible floor intrinsic to the E2M1 grid. The ratios $\|\mathbf{e}^{\mathrm{scale}}\|^2/\|\mathbf{e}\|^2 \approx 1.72$ and $\cos(\mathbf{e}^{\mathrm{scale}}, \mathbf{e}^{\mathrm{grid}}) \approx -0.66$ hold across 18,876 weight tensors in Qwen2.5-3B and Qwen3-30B-A3B-Base, suggesting they are properties of the format itself.

The corrections are mechanism-targeted but not component-exclusive: MBS reduces scale bias, OF recovers deadzone entries (and partially reduces scale-bias-induced error), and AQN controls residual grid noise. On Qwen2.5-3B (dense), MBS+AQN+OF recovers BF16 within 0.7\,pp (81.3\% vs.\ 82.0\%); on Qwen3-30B-A3B-Base (MoE), AQN+MBS+OF ($\alpha\!=\!0.5$) exceeds BF16 by $+1.0$\,pp (92.49\% vs.\ 91.51\%).

\textbf{Limitations.}\label{sec:limitations}
GSM8K uses short-to-medium responses ($1024$-token budget); long responses are harder to recover under the current recipe due to autoregressive trajectory divergence rather than averaging (Appendix~\ref{app:longcot}). Other reasoning, coding, and safety benchmarks remain to be validated. We use QDQ emulation, so Huawei Ascend 950 and other accelerators' throughput numbers are future work. The AQN $\sigma_{\mathrm{start}}$ requires per-model tuning and we do not yet provide an adaptive rule. Most configurations are single-seed.

\textbf{Future work.}
Combining MBS/OF with upstream techniques (stochastic rounding, Hadamard transforms~\citep{tseng2025trainingllmsmxfp4}) could further reduce the grid+deadzone floor. Extending the analysis to activation quantization, where deadzone rates are higher, would complete the W4A4 picture.


{
\small
\bibliographystyle{plainnat}
\bibliography{references}
}


\appendix

\section{Analysis of the cross term \texorpdfstring{$\langle \be^{\mathrm{scale}},\, \be^{\mathrm{grid}} \rangle$}{<e\^scale, e\^grid>}}
\label{app:proof_lemma2}

\textbf{Note on scope.} The main body establishes the MSE identity (Eq.~\ref{eq:identity}) and reports the empirical anti-correlation $\cos(\be^{\mathrm{scale}}, \be^{\mathrm{grid}}) \approx -0.66$ (Table~\ref{tab:decomp_empirical}). This appendix analyzes the cross term under the following idealized assumption. Under this assumption the cross term is $\mathcal{O}(B^{-1/2})$, but in practice, the ceiling-based scale selection ($\gamma_b \geq 1$ always) creates a structural anti-correlation absent from the idealized model. The analysis below characterizes the idealized setting for completeness; the empirical measurements in Table~\ref{tab:decomp_empirical} and Appendix~\ref{app:empirical_orth} characterize the real setting.

\begin{assumption}
\label{assum:A}
Within each block the elements $(x_{b,1}, \ldots, x_{b,B})$ are i.i.d.\ from a continuous, symmetric distribution $\cP$ with finite fourth moment. The quantity $\log_2 s_b^*$ is non-lattice.
\end{assumption}

Define the scale ratio $\gamma_b = s_b/s_b^* = 2^{\delta_b}$, normalized elements $u_{b,i} = q_{\max}\cdot x_{b,i}/m_b$, and let $r(\cdot)$ denote $\mathrm{round}_{\mathrm{E2M1}}$.

\textbf{Step 1: Characterize $\gamma_b$.}\;
Under Assumption~\ref{assum:A}, $\delta_b = \lceil\log_2 s_b^*\rceil - \log_2 s_b^*$ depends only on the fractional part of $\log_2 m_b$. For ceiling rounding, $\delta_b \in [0, 1)$. By the non-lattice condition, $\delta_b$ is approximately $\mathrm{Uniform}[0, 1)$, giving
\begin{equation}
\label{eq:app_gamma_dist}
  \gamma_b \sim 2^U,\; U \sim \mathrm{Uniform}[0,\, 1)\,,
  \quad
  \Expect[\gamma_b] = \frac{1}{\ln 2} \approx 1.44\,,
  \quad
  \mathrm{RMSE}(\gamma_b {-} 1) \approx 0.57\,.
\end{equation}
Empirically, we measure $\mathrm{RMSE}(\gamma_b {-} 1) = 0.57$ on Qwen3-30B-A3B-Base, consistent with the ceiling-rounding model.

\textbf{Step 2: Conditional independence structure.}\;
Write the cross term for block~$b$:
\begin{equation}
  \inner{\be_b^{\mathrm{scale}}}{\be_b^{\mathrm{elem}}}
  = \sum_{i=1}^B e_{b,i}^{\mathrm{scale}} \cdot e_{b,i}^{\mathrm{elem}}\,,
\end{equation}
where $e^{\mathrm{elem}} = e^{\mathrm{DZ}} + e^{\mathrm{grid}}$. Conditioning on $m_b$, the scale ratio $\gamma_b$ becomes deterministic:
\begin{align}
  e_{b,i}^{\mathrm{scale}} &= s_b \cdot r(x_{b,i}/s_b) - s_b^* \cdot r(x_{b,i}/s_b^*)
                           = h(\gamma_b,\, u_{b,i})\,, \\
  e_{b,i}^{\mathrm{elem}}  &= s_b^* [r(u_{b,i}) - u_{b,i}]
                           = s_b^*\, g(u_{b,i})\,,
\end{align}
with $h(\gamma_b, u) = 0$ when $\gamma_b = 1$.

\textbf{Step 3: Conditional distribution of non-max elements.}\;
Since the $B$ elements are i.i.d.\ from $\cP$, conditioning on the block maximum $m_b = \max_j |x_{b,j}|$ only tells us that each non-max element satisfies $|x_{b,i}| \leq m_b$. By Bayes' rule, the conditional distribution of each non-max element is simply $\cP$ truncated to $[-m_b, m_b]$:
\begin{equation}
  P(x_{b,i} \in A \mid m_b) = \frac{P(x_{b,i} \in A \cap [-m_b, m_b])}{F(m_b) - F(-m_b)}\,,
\end{equation}
and the $B{-}1$ non-max elements remain mutually independent (conditioning on the max restricts their range but does not introduce dependence). For symmetric $\cP$ with smooth density $f$, the conditional expectation of any function of the normalized element $u_{b,i} = q_{\max} \cdot x_{b,i}/m_b$ is:
\begin{equation}
  \Expect[g(u_{b,i}) \mid m_b]
  = \int_{-q_{\max}}^{q_{\max}} g(u)\,
    \frac{f(u \cdot m_b/q_{\max})}{F(m_b) - F(-m_b)}
    \cdot \frac{m_b}{q_{\max}}\, du\,.
\end{equation}

For $\gamma_b$ near~1, Taylor expansion gives $h(\gamma_b, u) \approx (\gamma_b - 1)\cdot\phi(u)$, so:
\begin{equation}
  \Expect[h(\gamma_b, u_{b,i})\cdot g(u_{b,i}) \mid m_b]
  \approx (\gamma_b - 1)\,\Expect[\phi(u_{b,i})\,g(u_{b,i}) \mid m_b]\,.
\end{equation}

\textbf{Step 4: CLT averaging.}\;
Taking expectation over $m_b$:
\begin{equation}
  \Expect[e_{b,i}^{\mathrm{scale}} \cdot e_{b,i}^{\mathrm{elem}}]
  = \Expect_{m_b}[(\gamma_b - 1)\,\psi(m_b)]\,,
\end{equation}
where $\psi(m_b) = \Expect[\phi(u)\,g(u) \mid m_b]$. The block-summed cross term has $B$ conditionally i.i.d.\ terms, so by the CLT:
\begin{equation}
  \frac{1}{B}\sum_{i=1}^B e_{b,i}^{\mathrm{scale}} \cdot e_{b,i}^{\mathrm{elem}}
  = (\gamma_b - 1)\,\psi(m_b) + \mathcal{O}_P(B^{-1/2})\,.
\end{equation}

The leading term $\Expect[(\gamma{-}1)\psi]$ is small: $\psi$ involves the product $\phi(u)\cdot g(u)$, which has alternating sign across E2M1 grid intervals. Under the smoothness of $f$, these cancel to $\mathcal{O}(\Var(\gamma)/\sqrt{B})$. Summing over $N$ blocks and normalizing by $\Expect[\norm{\be}^2] = \Theta(NB\cdot\Var(\gamma))$ yields the $\mathcal{O}(B^{-1/2})$ relative bound. $\blacksquare$

\section{Individual bounds for Scale $\perp$ DZ and Scale $\perp$ Grid}
\label{app:individual_bounds}

This appendix illustrates the orthogonality $\inner{\be^{\mathrm{scale}}}{\be^{\mathrm{DZ}}} = 0$ (Lemma~\ref{lem:exact}) and the anti-correlation $\inner{\be^{\mathrm{scale}}}{\be^{\mathrm{grid}}} < 0$ (Remark~\ref{rem:anticorr}) with a worked numerical example.

Because $\be^{\mathrm{DZ}}$ and $\be^{\mathrm{grid}}$ have disjoint support (Lemma~\ref{lem:exact}), the single surviving cross term partitions element-wise:
\begin{align}
  \inner{\be^{\mathrm{scale}}}{\be^{\mathrm{DZ}}}
  &= \sum_{b}\sum_{i \in \cD_b} e_{b,i}^{\mathrm{scale}} \cdot e_{b,i}^{\mathrm{DZ}}\,, \\
  \inner{\be^{\mathrm{scale}}}{\be^{\mathrm{grid}}}
  &= \sum_{b}\sum_{i \notin \cD_b} e_{b,i}^{\mathrm{scale}} \cdot e_{b,i}^{\mathrm{grid}}\,.
\end{align}
These are sums over \textbf{disjoint index sets} within each block. Two consequences follow.

\paragraph{Why $\inner{\be^{\mathrm{scale}}}{\be^{\mathrm{DZ}}} = 0$ exactly.}
For ceiling-based exponent selection ($s_b = 2^{\lceil\log_2 s_b^*\rceil}$, the standard convention to avoid overflow), $\gamma_b \geq 1$, so the deadzone under $s_b$ is \emph{wider} than under $s_b^*$. Every element in $\cD_b$ maps to zero under both scales, giving $e_i^{\mathrm{scale}} = Q(x_i) - Q^*(x_i) = 0$ for all $i \in \cD_b$. The only surviving cross term in the MSE identity (Eq.~\ref{eq:identity}) is therefore $\inner{\be^{\mathrm{scale}}}{\be^{\mathrm{grid}}}$, which acts exclusively on non-deadzone elements.

\paragraph{Numerical example.}
Consider a block $\bx = [0.03,\; 0.1,\; 0.3,\; 0.5,\; 0.9,\; 1.5,\; 2.0,\; 4.0]$ with $q_{\max}{=}6$. The ideal scale is $s^* = 4.0/6 = 0.667$; the E8M0 scale is $s = 2^0 = 1.0$; $\gamma = s/s^* = 1.5$. The deadzone threshold is $s^* \times 0.25 = 0.167$, placing $x_1{=}0.03$ and $x_2{=}0.1$ in $\cD_b$.

\begin{table}[h]
\centering
\small
\begin{tabular}{cccccccc}
\toprule
$i$ & $x_i$ & DZ? & $Q^*(x_i)$ & $Q(x_i)$ & $e_i^{\mathrm{scale}}$ & $e_i^{\mathrm{DZ}}$ & $e_i^{\mathrm{grid}}$ \\
\midrule
1 & 0.03 & \checkmark & 0     & 0    & \textbf{0}      & $-0.03$  & 0 \\
2 & 0.10 & \checkmark & 0     & 0    & \textbf{0}      & $-0.10$  & 0 \\
3 & 0.30 &            & 0.333 & 0.50 & $+0.167$ & 0        & $+0.033$ \\
4 & 0.50 &            & 0.667 & 0.50 & $-0.167$ & 0        & $+0.167$ \\
5 & 0.90 &            & 1.000 & 1.00 & $0$      & 0        & $+0.100$ \\
6 & 1.50 &            & 1.333 & 1.50 & $+0.167$ & 0        & $-0.167$ \\
7 & 2.00 &            & 2.000 & 2.00 & $0$      & 0        & $0$ \\
8 & 4.00 &            & 4.000 & 4.00 & $0$      & 0        & $0$ \\
\bottomrule
\end{tabular}
\end{table}

Since $\gamma \geq 1$, both deadzone elements have $e_i^{\mathrm{scale}} = 0$ (bold), giving $\inner{\be^{\mathrm{scale}}}{\be^{\mathrm{DZ}}} = 0$ exactly. The only nonzero cross term is $\inner{\be^{\mathrm{scale}}}{\be^{\mathrm{grid}}} = (+0.167)(+0.033) + (-0.167)(+0.167) + (+0.167)(-0.167) = -0.050$, which has mixed-sign terms that partially cancel within themselves. There is no opposing $\inner{\be^{\mathrm{scale}}}{\be^{\mathrm{DZ}}}$ to mask this value.

The numerical example also illustrates the anti-correlation: at $i = 3$, $e^{\mathrm{scale}} = +0.167$ and $e^{\mathrm{grid}} = +0.033$ (same sign, weak); at $i = 4$, $e^{\mathrm{scale}} = -0.167$ and $e^{\mathrm{grid}} = +0.167$ (opposite sign, strong); at $i = 6$, $e^{\mathrm{scale}} = +0.167$ and $e^{\mathrm{grid}} = -0.167$ (opposite sign). The net inner product is negative ($-0.050$), consistent with the $\cos \approx -0.66$ observed at scale across all weight tensors (Table~\ref{tab:decomp_empirical}).

\section{GEMM output error propagation}
\label{app:gemm}

For a linear layer $\by = \bW\bx$ with quantized weights $Q(\bW) = \bW + \bE$:
\begin{equation}
  \hat{\by} = Q(\bW)\bx = \by + \bE\bx\,,
  \quad \bE = \bE^{\mathrm{scale}} + \bE^{\mathrm{DZ}} + \bE^{\mathrm{grid}}\,.
\end{equation}

\begin{corollary}[GEMM-level decomposition]
\label{cor:gemm}
If $\bE$ and $\bx$ are independent, then $\Expect[\norm{\bE\bx}^2] = \tr(\Expect[\bE^\top\bE]\,\bSigma_x)$. By Eq.~\ref{eq:identity} and Lemma~\ref{lem:exact}, $\Expect[\bE^\top\bE]$ decomposes with one cross term (scale$\times$grid) and exact orthogonality of the DZ component:
\begin{equation}
  \Expect[\norm{\bdelta}^2]
  \approx
  \underbrace{\tr\!\bigl(\Expect[(\bE^{\mathrm{scale}})^\top\bE^{\mathrm{scale}}]\,\bSigma_x\bigr)}_{\sigma_\delta^2(\mathrm{scale})}
  + \underbrace{\tr\!\bigl(\Expect[(\bE^{\mathrm{DZ}})^\top\bE^{\mathrm{DZ}}]\,\bSigma_x\bigr)}_{\sigma_\delta^2(\mathrm{DZ})}
  + \underbrace{\tr\!\bigl(\Expect[(\bE^{\mathrm{grid}})^\top\bE^{\mathrm{grid}}]\,\bSigma_x\bigr)}_{\sigma_\delta^2(\mathrm{grid})}\,.
\end{equation}
\end{corollary}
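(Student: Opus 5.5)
The plan has two parts. First I prove the exact trace identity. Then I expand $\bE^\top\bE$ into component pieces and use Lemma~\ref{lem:exact} to remove the cross terms that vanish. Write $\bdelta = \bE\bx$, so that $\norm{\bdelta}^2 = \bx^\top\bE^\top\bE\bx = \tr(\bE^\top\bE\,\bx\bx^\top)$. Conditioning on $\bE$ and using the assumed independence of $\bE$ and $\bx$ gives
\[
\Expect[\norm{\bdelta}^2] = \tr\bigl(\Expect[\bE^\top\bE]\,\Expect[\bx\bx^\top]\bigr).
\]
Here I take $\bSigma_x = \Expect[\bx\bx^\top]$, the second-moment matrix, which equals the covariance if $\bx$ is centred. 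This part is exact and routine.

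Next I substitute $\bE = \bE^{\mathrm{scale}}+\bE^{\mathrm{DZ}}+\bE^{\mathrm{grid}}$ and expand $\bE^\top\bE$ into three diagonal terms and six cross terms $(\bE^{\alpha})^\top\bE^{\beta}$ with $\alpha \neq \beta$. The key observation is that Lemma~\ref{lem:exact} is really a pointwise statement. Its proof shows that, entrywise, $E^{\mathrm{DZ}}_{jk}E^{\mathrm{scale}}_{jk}=0$ and $E^{\mathrm{DZ}}_{jk}E^{\mathrm{grid}}_{jk}=0$: the supports are disjoint because the DZ component lives on $\cD_b$, where the scale component vanishes by the ceiling property and the grid component vanishes by construction. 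A matrix-level cross term, however, is not an entrywise product. Its $(k,k')$ entry is $\sum_j E^{\mathrm{DZ}}_{jk}E^{\mathrm{scale}}_{jk'}$, which mixes columns $k \neq k'$. So disjoint support kills only the diagonal of $(\bE^{\mathrm{DZ}})^\top\bE^{\mathrm{scale}}$, not the whole matrix.

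I would handle this in two stages.

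\textbf{Stage one: the white case.} In the case $\bSigma_x \propto \bm{I}$, only the trace of each cross term matters. That trace is $\sum_{jk}E^{\mathrm{DZ}}_{jk}E^{\mathrm{scale}}_{jk} = \inner{\be^{\mathrm{DZ}}}{\be^{\mathrm{scale}}} = 0$, so the DZ cross terms vanish exactly. This recovers Eq.~\ref{eq:identity} at the GEMM level.

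\textbf{Stage two: the general case.} For general $\bSigma_x$, the off-diagonal entries of the cross-term matrices couple errors in different columns $k \neq k'$ of the same row $j$. These entries may lie in different blocks, or in the same block at different positions. I would argue they are negligible in expectation, for two reasons: within a block, the membership of position $k$ in $\cD_b$ and the signed error at position $k'$ are only weakly dependent; and the DZ error $-x_{jk}$ has a symmetric sign under Assumption~\ref{assum:A}. Together these give $\Expect[E^{\mathrm{DZ}}_{jk}E^{\mathrm{scale}}_{jk'}] \approx 0$ for $k\neq k'$. This is why the corollary uses $\approx$ rather than equality.

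\textbf{The remaining cross term.} The scale$\times$grid cross term does not vanish, even in the white case: it equals $2\inner{\be^{\mathrm{scale}}}{\be^{\mathrm{grid}}}$, which Table~\ref{tab:decomp_empirical} measures as about $-1.46\norm{\be}^2$. The displayed three-term formula is therefore an approximation that drops it. I would state that the formula holds up to the additional term $2\tr(\Expect[(\bE^{\mathrm{scale}})^\top\bE^{\mathrm{grid}}]\bSigma_x)$, as the corollary's sentence about ``one cross term'' indicates. That term is exact in structure and is absorbed when scale precision is high (Eq.~\ref{eq:floor}).

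The main obstacle is justifying the off-diagonal cross terms for non-isotropic $\bSigma_x$, and being honest that the surviving scale$\times$grid term is not small. For the first point, I would begin with the symmetry argument: flipping the signs of all elements in a block leaves the deadzone indicator unchanged and negates both errors. Then I would bound what is left over in terms of the off-diagonal mass of $\bSigma_x$.
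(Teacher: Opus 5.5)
Your first step (the trace identity with $\bSigma_x=\Expect[\bx\bx^\top]$) and your treatment of the scale$\times$grid term match the paper. The paper simply asserts the decomposition ``by Eq.~\ref{eq:identity} and Lemma~\ref{lem:exact}'' and states that the $\approx$ consists of dropping $2\tr(\Expect[(\bE^{\mathrm{scale}})^\top\bE^{\mathrm{grid}}]\,\bSigma_x)$, which vanishes under MBS. So the $\approx$ refers to that term, not to the DZ off-diagonals as you suggest; the paper claims the DZ part is exact.

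Your observation that Lemma~\ref{lem:exact} is an entrywise statement, and so only annihilates the diagonal of $(\bE^{\mathrm{DZ}})^\top\bE^{\mathrm{scale}}$, is correct. It is a step the paper skips. For a fixed weight matrix the off-diagonal entries are generically nonzero: in the Appendix~\ref{app:individual_bounds} block, $E^{\mathrm{DZ}}_{2}E^{\mathrm{scale}}_{4}=(-0.10)(-0.167)\neq 0$. Hence with deterministic $\bE$ the DZ terms vanish only when $\bSigma_x$ is diagonal. Your stage one therefore extends from $\bSigma_x\propto\bm{I}$ to any diagonal $\bSigma_x$, because each diagonal entry $\sum_j E^{\mathrm{DZ}}_{jk}E^{\mathrm{scale}}_{jk}$ vanishes term by term.

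The gap is in stage two: the symmetry you propose gives nothing. Flipping the signs of all entries of a block negates both $E^{\mathrm{DZ}}_{jk}$ and $E^{\mathrm{scale}}_{jk'}$, so their product is unchanged and nothing cancels.

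The right move is to flip a single coordinate, $x_{jk}\mapsto -x_{jk}$. The quantities $m_b$, $s_b$, $s_b^*$ and every deadzone indicator depend only on absolute values, and E2M1 round-to-nearest is odd. So this flip negates $E^{\mathrm{DZ}}_{jk}$ (and also $E^{\mathrm{scale}}_{jk}$ and $E^{\mathrm{grid}}_{jk}$). It leaves every error at a position $k'\neq k$ untouched, whether or not $k'$ lies in the same block.

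Under Assumption~\ref{assum:A} with independent blocks, this flip preserves the law of the row. Therefore $\Expect[E^{\mathrm{DZ}}_{jk}E^{\mathrm{scale}}_{jk'}]=\Expect[E^{\mathrm{DZ}}_{jk}E^{\mathrm{grid}}_{jk'}]=0$ \emph{exactly} for $k\neq k'$. Combined with the pointwise vanishing of the diagonal, this gives $\Expect[(\bE^{\mathrm{DZ}})^\top\bE^{\mathrm{scale}}]=\Expect[(\bE^{\mathrm{DZ}})^\top\bE^{\mathrm{grid}}]=0$ as matrices.

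The DZ cross terms thus drop for every $\bSigma_x$. You need no weak-dependence heuristic and no bound in terms of the off-diagonal mass of $\bSigma_x$, and this is exactly what the paper's ``exact orthogonality'' requires. The same flip shows that $\Expect[\bE^\top\bE]$ is itself diagonal, so only $\mathrm{diag}(\bSigma_x)$ enters at all.
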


The approximation drops the scale$\times$grid cross term $2\tr(\Expect[(\bE^{\mathrm{scale}})^\top\bE^{\mathrm{grid}}]\,\bSigma_x)$; under MBS ($\norm{\bE^{\mathrm{scale}}} \to 0$) this term vanishes and the equality becomes exact.

For W4A4 (both weights and activations quantized), the activation error $\be_A$ adds an independent term $\norm{\bW\be_A}^2$ (the cross term $\bE_W\be_A$ is second-order). The same three-way decomposition applies to $\be_A$.

\section{Detailed RL pathway analysis}
\label{app:pathway_details}

This appendix provides (i)~the full $3 \times 3$ analysis of how each error component interacts with each RL pathway, and (ii)~extended derivations for each dominant effect.

\subsection{Full $3 \times 3$ interaction analysis}
\label{app:full_matrix}

\paragraph{Scale bias on reward signal (secondary).}
Scale error perturbs each layer's output by a multiplicative factor $\gamma_b$, shifting rollout logits. However, LayerNorm resets activation magnitudes at each layer boundary, limiting the per-layer forward-pass distortion to $O(\gamma_b {-} 1) \approx O(0.5)$. This is significant but does not compound across layers the way it does in the backward pass.

\paragraph{Scale bias on exploration (secondary).}
The multiplicative logit shift also changes the effective entropy, but by the same $O(\gamma_b{-}1)$ per-layer amount. Since it has nonzero mean ($\Expect[\gamma{-}1] \approx 0.44$ under ceiling rounding), the effect is more bias-like than noise-like, contributing to mode shift rather than distribution widening.

\paragraph{Deadzone on gradients (secondary).}
The STE backward pass treats $Q$ as identity: gradients flow through deadzone elements as if they retained their original values. The deadzone therefore has minimal direct impact on gradient computation, though the degraded forwardpass indirectly changes the rollout distribution.

\paragraph{Deadzone on exploration (secondary).}
Deadzone is contractive ($e_i^{\mathrm{DZ}} = -x_i$, always toward zero), not stochastic. It reduces the model's representational capacity without adding randomness. The entropy may increase slightly (a less capable model produces more uniform predictions), but this is a side effect of capacity loss, not a controlled exploration mechanism.

\paragraph{Grid noise on reward signal (secondary).}
Grid noise adds zero-mean variance to each token's logit vector. Over a full response sequence of hundreds of tokens, these i.i.d.\ perturbations average out in their effect on the overall response quality. The reward is determined by the complete sequence, not individual tokens, so the impact on reward signal quality is attenuated by $O(1/\sqrt{T})$ where $T$ is the sequence length.

\paragraph{Grid noise on gradients (secondary).}
Zero-mean forward-pass noise produces zero-mean gradient noise. This adds variance to the gradient estimate but no systematic bias. In contrast, scale error introduces a multiplicative bias that grows exponentially with depth. The gradient variance from grid noise is comparable to the inherent stochasticity of mini-batch sampling and does not qualitatively change training dynamics.

\subsection{Dominant effect 1: deadzone and reward signal degradation}

\paragraph{Effective rank reduction.}
Define $r_{\mathrm{eff}}(\bW) = \norm{\bW}_*^2/\norm{\bW}_F^2$. The deadzone applies a soft-thresholding operator $\cS_t(\bW)$ that zeroes elements below threshold $t = m_b/24$. Since small singular values are disproportionately affected by element-wise thresholding (by the Eckart--Young theorem):
\begin{equation}
  r_{\mathrm{eff}}(\cS_t(\bW)) < r_{\mathrm{eff}}(\bW)\,.
\end{equation}

\paragraph{Advantage variance increase.}
Lower effective rank leads to rollout responses with less nuanced reasoning. The reward signal $r_i$ has lower mean and higher variance, making the group-normalized advantage $\hat{A}_i = (r_i - \bar{r})/\sigma_r$ noisier:
\begin{equation}
  \Var(\hat{A}_i^Q) > \Var(\hat{A}_i)\,.
\end{equation}

\paragraph{Why OF helps.}
OF's two-pass residual QDQ recovers deadzone values: Pass~1 quantizes normally (outliers are accurate, small values enter the deadzone); Pass~2 quantizes the residual (whose dynamic range is small enough for previously dead values to be represented). The DZ rate drops from 9\% to 2\%, directly restoring effective rank and rollout quality. In our dense experiments, OF alone provides $+$17.5\,pp recovery (60.1\% $\to$ 77.6\%).

\paragraph{OF residual-blend coefficient $\alpha$.}
\label{app:of_alpha}
The OF reconstruction $\hat{\bx}_{\mathrm{OF}} = \hat{\bx}_1 + \alpha\,\hat{\bx}_2$ (Eq.~\ref{eq:of_result}) includes a blend coefficient $\alpha$. The naive choice $\alpha=1$ adds the full residual back, but $\hat{\bx}_2$ is itself an MXFP4-quantized approximation of the residual and inherits its own grid + deadzone error. Adding it at half strength avoids over-correction and improves rollout quality on GSM8K (Table~\ref{tab:of_alpha}).
\begin{table}[h]
  \caption{OF residual-blend coefficient $\alpha$ sensitivity (Qwen3-30B-A3B-Base, GSM8K, W4A4 with MBS+AQN). $\alpha=0.5$ (used throughout the paper) outperforms $\alpha=1.0$ by ${\sim}1$\,pp.}
  \label{tab:of_alpha}
  \centering
  \small
  \begin{tabular}{lcc}
    \toprule
    $\alpha$ & GSM8K final (\%) & GSM8K peak (\%) \\
    \midrule
    $0.5$ (default)   & \textbf{91.58} & \textbf{92.49} \\
    $1.0$             & 90.45 & 90.45 \\
    \bottomrule
  \end{tabular}
\end{table}

\paragraph{Why $\alpha=0.5$: rollout--training drift minimization.}
The static argument above explains why $\alpha=1$ is suboptimal, but not why the optimum sits near one-half. We find a sharper, training-specific reason: $\alpha=0.5$ minimizes the rollout-vs-training numerical drift that RL is sensitive to. In a controlled single-step isolation run (Qwen3-30B-A3B, OF applied identically to weights and activations, MBS/AQN disabled, learning rate $0$, $32$ fixed DAPO-MATH prompts so $\alpha$ is the only varying quantity), the divergence between the vLLM rollout policy and the training-side recomputed policy is U-shaped in $\alpha$ with a clean minimum at $\alpha=0.5$ (Figure~\ref{fig:alpha_drift}); both extremes incur roughly $1.5$--$2\times$ the drift of $\alpha=0.5$. The interpretation is that a half-strength residual best aligns the rollout fake-quantization grid with the training grid: $\alpha<0.5$ under-recovers the deadzone, while $\alpha>0.5$ over-corrects per-channel and amplifies the mismatch between the rollout and training kernels. Since RL post-training is drift-sensitive, the $\alpha$ that minimizes this mismatch is also the $\alpha$ that maximizes downstream accuracy, consistent with the GSM8K gap in Table~\ref{tab:of_alpha}.
\begin{figure}[h]
  \centering
  \includegraphics[width=0.7\linewidth]{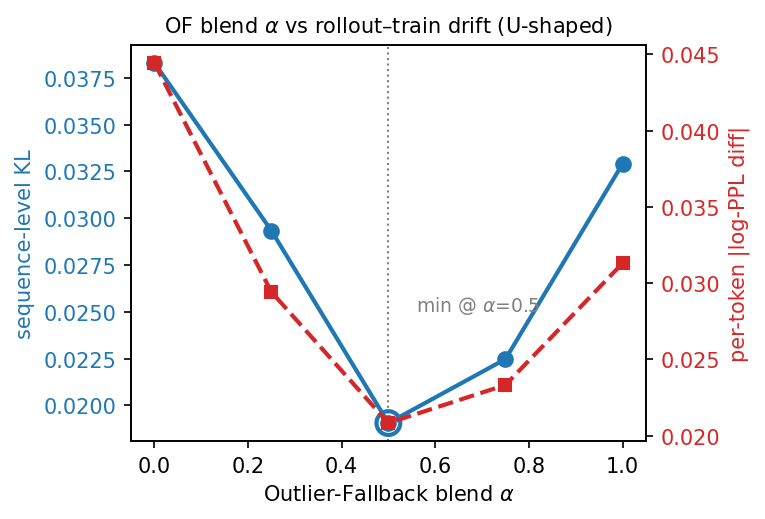}
  \caption{Outlier-Fallback blend $\alpha$ versus rollout--training drift (Qwen3-30B-A3B, controlled single-step isolation: OF only, MBS/AQN/attention-quant disabled, learning rate $0$, $32$ fixed DAPO-MATH prompts). Sequence-level KL between rollout and training log-probabilities, and per-token absolute log-perplexity difference, are both U-shaped in $\alpha$ with a clean minimum at $\alpha=0.5$. The training-dynamics counterpart of Table~\ref{tab:of_alpha}.}
  \label{fig:alpha_drift}
\end{figure}

\paragraph{Dense vs.\ MoE asymmetry.}
In MoE architectures, expert routing acts as a natural error-correcting code: even if one expert's small weights are pruned, other experts can compensate. Dense models have no such redundancy, so deadzone damage is fully expressed. This explains why OF is critical for dense ($+$17.5\,pp) but only modestly helpful for MoE ($+$1.5\,pp). Figure~\ref{fig:dense_moe_of} shows the training curves.

\begin{figure}[h]
  \centering
  \includegraphics[width=\linewidth]{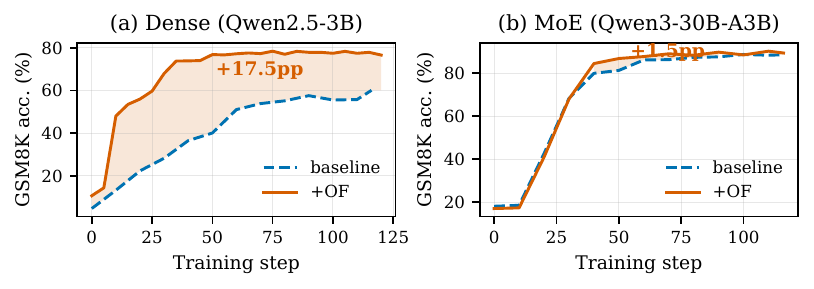}
  \caption{OF sensitivity by architecture. (a)~Dense: OF provides $+$17.5\,pp. (b)~MoE: OF provides only $+$1.5\,pp. MoE expert routing compensates for deadzone pruning.}
  \label{fig:dense_moe_of}
\end{figure}

\subsection{Pathway 2 details: policy gradient bias}

\paragraph{Gradient scale derivation.}
For a single linear layer with quantized weight, the forward pass output is $\hat{\by} = \gamma_b \cdot \bW_b\bx + \tilde{\be}_b\bx$, where $\tilde{\be}_b$ contains element-level errors. The STE backward pass treats $Q$ as identity, computing $\partial\cL/\partial\bW|_{\mathrm{STE}} = (\partial\cL/\partial\hat{\by})\cdot\bx^\top$ at a point shifted by $\gamma_b$. Through the chain rule across $L$ layers:
\begin{equation}
\hat{\nabla}_\theta\cL
  = \prod_{l=1}^L \gamma_{b(l)} \cdot \nabla_\theta\cL\big|_{\mathrm{true}}
  + \text{higher-order}\,.
\end{equation}

Since $\delta_b \sim \mathrm{Uniform}[0,1)$ approximately i.i.d., by the CLT:
\begin{equation}
  \sum_{l=1}^L \delta_{b(l)} \sim \cN(0,\, L/12)\,.
\end{equation}
For Qwen2.5-3B with $L{=}36$, $\mathrm{std} = \sqrt{36/12} = \sqrt{3} \approx 1.73$, so the gradient magnitude ratio ranges from ${\sim}0.18\times$ to ${\sim}5.6\times$ within one standard deviation.

\paragraph{Why MBS helps.}
MBS replaces $s_b = 2^e$ with $s_b^{\mathrm{MBS}} = 2^e\cdot(1 + c/256)$, where $c$ is an 8-bit mantissa correction at macro-block granularity (128 elements). This reduces $\Var(\gamma)$ by ${\sim}(256)^{-2}$, effectively eliminating scale-induced gradient bias.

\paragraph{Connection to TIS.}
TIS (Section~\ref{sec:setup}) clips the importance ratio $\rho_i$ to $\rho_{\max}$. In our experiments, TIS${}=5$ is critical for dense models: with TIS${}=2$, the quantized rollout policy diverges too far from the training policy, causing importance ratios to concentrate near the clipping boundary and effectively zeroing out the learning signal. The larger TIS${}=5$ threshold accommodates the additional policy divergence introduced by quantization. This is consistent with scale bias being a gradient-pathway problem: quantization-induced logit perturbations increase $\rho_i$ variance, and a tighter TIS clip amplifies the resulting gradient bias.

\subsection{Pathway 3 details: exploration--exploitation perturbation}
\label{app:eff_temp}

\paragraph{Grid noise distribution.}
The grid error for non-deadzone elements has variance depending on the local E2M1 step size:
\begin{equation}
  e_{b,i}^{\mathrm{grid}} \sim \mathrm{Uniform}[-\Delta(u_{b,i})/2,\; \Delta(u_{b,i})/2]\,,
\end{equation}
where $\Delta = 0.5$ for $|u|\in[0,2]$, $\Delta = 1$ for $|u|\in[2,4]$, and $\Delta = 2$ for $|u|\in[4,6]$.

\paragraph{Effective temperature.}
This noise propagates to logits as $\bell^Q = \bell + \boldeta$, $\boldeta \sim \cN(\bm{0}, \sigma_\eta^2\bm{I})$ approximately by CLT, where $\sigma_\eta^2 \propto \sum_l \sigma_{\delta,l}^2(\mathrm{grid})$ accumulates over layers. Matching the signal-to-noise ratio of logit differences under temperature scaling versus additive noise (derivation below) gives $T_{\mathrm{eff}} \approx \sqrt{1 + 2\sigma_\eta^2/\Var(\Delta\ell)} > 1$, directly increasing policy entropy:
\begin{equation}
  \cH[\pi_{Q(\btheta)}] > \cH[\pi_{\btheta} ].
\end{equation}
This matches the experimental observation: baseline entropy drops to 0.35 by step~50, while AQN+MBS entropy stays at 0.61.

\paragraph{Derivation of $T_{\mathrm{eff}}$ (Eq.~\ref{eq:eff_temp}).}
The square-root form arises from matching the softmax-induced action distribution under additive logit noise to the same distribution under deterministic temperature scaling. Three observations drive the derivation.

\emph{(i) Logit differences are all that matter.} The softmax policy depends only on pairwise differences: for actions $a, b$,
\begin{equation}
\log \frac{\pi(a\,|\,s)}{\pi(b\,|\,s)} \;=\; \ell_a - \ell_b \;=:\; \Delta\ell_{ab}.
\end{equation}
Temperature scaling at $T$ shrinks these differences uniformly to $\Delta\ell_{ab}/T$.

\emph{(ii) Per-pair noise has doubled variance.} With $\eta_a, \eta_b$ i.i.d.\ $\sim\cN(0,\sigma_\eta^2)$, the pairwise difference is
\begin{equation}
\Delta\eta_{ab} \;:=\; \eta_a - \eta_b \;\sim\; \cN(0,\, 2\sigma_\eta^2),
\end{equation}
which is the source of the factor 2 inside the square root of Eq.~\ref{eq:eff_temp}.

\emph{(iii) Probit--logit Gaussian-marginalization identity.} Using the classical approximation $\sigma(z) \approx \Phi(z/\kappa)$ with $\kappa = \sqrt{8/\pi}$, and the fact that the convolution of a probit with a Gaussian is another probit with summed variances,
\begin{equation}
\label{eq:probit_logit_match}
\Expect_{\Delta\eta_{ab}}\!\bigl[\sigma(\Delta\ell_{ab} + \Delta\eta_{ab})\bigr]
\;\approx\;
\sigma\!\left(\frac{\Delta\ell_{ab}}{\sqrt{1 + \Var(\Delta\eta_{ab})/\kappa^2}}\right)\,,
\end{equation}
where $\sigma(\cdot)$ is the logistic function and $\Phi(\cdot)$ the standard-normal CDF. The right-hand side is exactly a temperature-scaled deterministic softmax at $T = \sqrt{1 + 2\sigma_\eta^2/\kappa^2}$, recovering the square-root form.

\emph{From $\kappa^2$ to $\Var(\Delta\ell)$.}
Eq.~\ref{eq:eff_temp} substitutes the empirical logit-difference variance $\Var(\Delta\ell)$ for the constant $\kappa^2$. This is a signal-to-noise argument: $\kappa$ is the characteristic logit scale over which the sigmoid transitions, while $\Var(\Delta\ell)^{1/2}$ is the policy's \emph{own} characteristic logit scale at the current training step. The substitution makes $T_{\mathrm{eff}}$ adaptive to the current policy sharpness: (a)~when the policy is confident, $\Var(\Delta\ell) \gg \sigma_\eta^2$, so $T_{\mathrm{eff}} \to 1$ and noise is negligible; (b)~when the policy is flat, $\Var(\Delta\ell) \ll \sigma_\eta^2$, so $T_{\mathrm{eff}} \to \sqrt{2\sigma_\eta^2/\Var(\Delta\ell)}$ and noise dominates. The ``$1{+}$'' clamps $T_{\mathrm{eff}} \ge 1$: zero-mean grid noise can only \emph{widen}, never sharpen, the policy. In the noise-free limit $\sigma_\eta\to 0$, $T_{\mathrm{eff}} = 1$ recovers the unperturbed policy.

\paragraph{AQN as controlled exploration.}
AQN injects scheduled noise $\sigma_{\mathrm{AQN}}(t): \sigma_{\mathrm{start}} \to \sigma_{\mathrm{end}}$, giving total noise $\sigma_{\mathrm{total}}(t) = \sqrt{\sigma_{\mathrm{grid}}^2 + \sigma_{\mathrm{AQN}}^2(t)}$. Early in training, $\sigma_{\mathrm{AQN}} \gg \sigma_{\mathrm{grid}}$ provides controlled broad exploration; late, $\sigma_{\mathrm{AQN}} \to 0$ allows exploitation.

\paragraph{AQN+MBS synergy.}
Without MBS, the noise floor includes scale error (${\sim}54\%$ RMSE in the scale ratio, block-correlated). This structured noise contaminates AQN's controlled exploration. After MBS reduces $\sigma_{\mathrm{scale}}$ to ${\sim}0.1\%$, only the i.i.d.-like $\sigma_{\mathrm{grid}}$ remains, which AQN can effectively ``train through''. This explains the superadditive result: MBS alone $+$0.9\,pp, AQN alone $+$0.4\,pp, combined $+$2.0\,pp. Figures~\ref{fig:sigma_sweep} and~\ref{fig:grad_fluct} provide supporting evidence.

\begin{figure}[h]
  \centering
  \includegraphics[width=0.5\linewidth]{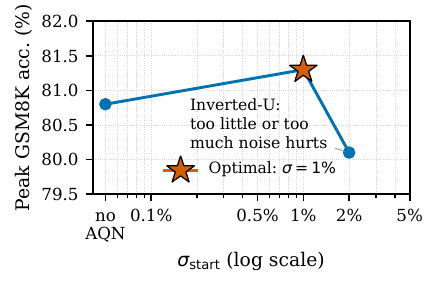}
  \caption{AQN $\sigma_{\mathrm{start}}$ sensitivity (Dense, MBS+OF). $\sigma = 1\%$ is optimal; $2\%$ overshoots and degrades below no-AQN baseline.}
  \label{fig:sigma_sweep}
\end{figure}

\section{Complementarity with upstream techniques}
\label{app:upstream_techniques}

Our two error corrections (MBS, OF) operate during quantization, while AQN operates on the training dynamics. An alternative strategy is to reshape the input distribution \emph{before} quantization so that the format's limitations bite less. Stochastic rounding (SR), originally introduced for low-precision neural training by~\citet{gupta2015limited}, and random Hadamard transforms (RHT), as used by~\citet{tseng2025trainingllmsmxfp4} for MXFP4 pre-training, both follow this upstream approach.

In terms of our decomposition, SR primarily targets $\be^{\mathrm{grid}}$: it makes the rounding error exactly zero-mean at every point (not just on average; classical quantization-noise theory~\citep{bennett1948,widrow2008quantization} already gives the additive-uniform model that SR enforces unconditionally), eliminating the local bias within each quantization bin. SR also softens the deadzone (a value below the threshold has a nonzero probability of rounding to the nearest nonzero grid point) but does not eliminate it. RHT primarily targets $\be^{\mathrm{DZ}}$: by spreading outliers across elements via a random orthogonal rotation, it equalizes magnitudes within each block, reducing the fraction of values that fall below the deadzone threshold $m_b/24$.

Crucially, neither SR nor RHT addresses $\be^{\mathrm{scale}}$ directly: the E8M0 power-of-two scale remains after any input rotation. MBS is the only technique in our toolkit (or theirs) that corrects this component. The two families of techniques are therefore \textbf{complementary}: SR/RHT reshape the distribution to reduce deadzone and grid error at the source, while MBS/OF/AQN correct whatever error remains after quantization. Exploring their combination is a promising direction for future work.

\section{Empirical validation of decomposition structure}
\label{app:empirical_orth}

We measure the pairwise cosine similarities and the cross term on real weight tensors to validate the MSE identity (Eq.~\ref{eq:identity}) and the anti-correlation (Remark~\ref{rem:anticorr}). For each linear layer's weight tensor, we compute the three error components (Definition~\ref{def:error_components}) and report:

\begin{enumerate}
\item \textbf{Identity verification}: confirm that $\norm{\be^{\mathrm{scale}}}^2 + \norm{\be^{\mathrm{DZ}}}^2 + \norm{\be^{\mathrm{grid}}}^2 + 2\inner{\be^{\mathrm{scale}}}{\be^{\mathrm{grid}}} = \norm{\be}^2$ holds exactly (up to floating-point precision) for each tensor.
\item \textbf{Exact orthogonality}: verify $\cos(\be^{\mathrm{scale}}, \be^{\mathrm{DZ}}) = 0$ and $\cos(\be^{\mathrm{DZ}}, \be^{\mathrm{grid}}) = 0$ exactly, confirming Lemma~\ref{lem:exact}.
\item \textbf{Anti-correlation}: report $\cos(\be^{\mathrm{scale}}, \be^{\mathrm{grid}})$ per layer; expected $\approx -0.66$ consistently (Remark~\ref{rem:anticorr}).
\end{enumerate}

The main-body Table~\ref{tab:decomp_empirical} reports aggregated results across two model scales, and the accompanying main-body Figure~\ref{fig:orthogonality} visualizes the distribution of pairwise cosine similarities across all weight tensors, confirming: $\cos(\be^{\mathrm{scale}}, \be^{\mathrm{DZ}}) = 0$ and $\cos(\be^{\mathrm{DZ}}, \be^{\mathrm{grid}}) = 0$ exactly (Lemma~\ref{lem:exact}), while $\cos(\be^{\mathrm{scale}}, \be^{\mathrm{grid}}) \approx -0.66$ with remarkably tight spread.

\section{Long chain-of-thought: scope and trajectory divergence}
\label{app:longcot}

Section~\ref{sec:pathway_exploration} argues that grid noise, being zero-mean, raises the policy's effective temperature but ``averages out over full response sequences.'' This holds for the short-to-medium reasoning of GSM8K but reaches its limit as chains lengthen. We document the boundary as a scope limitation, not a revision of the GSM8K result.

\paragraph{Per-step drift is task-independent; outcomes are not.}
Holding the recipe family fixed (MBS+OF / AQN+MBS+OF), the rollout--actor probability Pearson correlation is essentially identical on a short task (GSM8K, median ${\approx}\,280$-token responses) and a long one (DAPO-MATH, ${\approx}\,795$ tokens): $0.999$ in BF16 versus ${\approx}\,0.97$ in MXFP4 in both ($0.975$ / $0.971$, Figure~\ref{fig:pearson_corr}). The MXFP4 per-step drift is intrinsic to W4A4, not task-specific. Yet outcomes diverge: the short task \emph{exceeds} BF16 by $+1.0$\,pp (Table~\ref{tab:moe}), while on the DAPO-MATH long-response validation set the same recipe under-recovers substantially. The only variable separating recovery from failure is sequence length --- the floor does not cancel but \emph{compounds} under longer autoregressive generation.

\begin{figure}[h]
  \centering
  \includegraphics[width=0.95\linewidth]{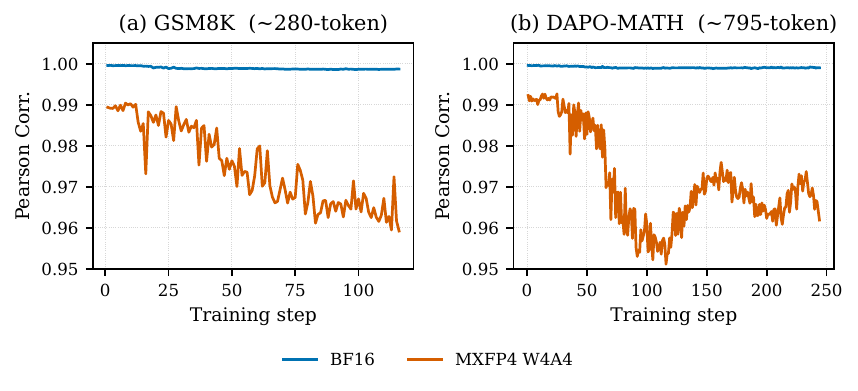}
  \caption{Per-step rollout--actor probability Pearson correlation across training, on a short task (a, GSM8K) and a long one (b, DAPO-MATH). BF16 stays at ${\approx}\,1.000$ on both tasks; MXFP4 W4A4 drifts to a mean of ${\approx}\,0.97$ on both ($0.975$ on GSM8K, $0.971$ on DAPO-MATH). The MXFP4 per-step drift is therefore intrinsic to W4A4 rather than task-specific --- yet end-task outcomes diverge sharply with response length (Table~\ref{tab:moe} for the short task; under-recovery on the long task). The same per-step number gives recovery in one regime and failure in the other.}
  \label{fig:pearson_corr}
\end{figure}

\paragraph{Mechanism: greedy trajectory fork.}
Under deterministic greedy decoding (no sampling noise, so any divergence is purely the quantization floor), BF16 and MXFP4 policies starting from the same base checkpoint produce near-disjoint trajectories (Figure~\ref{fig:greedy_fork}). A high teacher-forced per-token correlation looks benign because it is the quantity importance-sampling sees, but free generation occasionally flips an early arg-max, and that single flip compounds into a different solution path. After RL training the fork becomes systematically harmful: greedy disagreements are dominated by BF16-correct/MXFP4-wrong, with MXFP4 also producing many more unparseable or non-terminating responses. The long-CoT gap is therefore driven by trajectory divergence and non-termination, not by the static error budget our corrections minimize --- importance-sampling-side fixes cannot close it because reweighting tokens cannot undo the fact that the explored \emph{trajectories} differ.

\paragraph{Quality vs.\ response length.}
Under a strict $4$-gram repetition detector that excludes unformatted-but-valid outputs and length-capped truncation, W4A4 quality degrades earlier than BF16 with length, and degeneracy dominates the long tail (Figure~\ref{fig:quality_by_length}). The recovery the corrections achieve on short CoT does not extend to the long tail.

\paragraph{Where the gap is \emph{not}.}
In our setup, rollout and training forward passes already share the same QDQ path (identical E8M0 block scale, E2M1 grid, OF residual, rounding and block size), so the residual divergence is intrinsic to free-running autoregressive decoding rather than a rollout--training \emph{kernel} mismatch. The most promising next steps for long-CoT fidelity are therefore higher activation precision (e.g.\ W4A8) rather than rollout-kernel alignment.

\begin{figure}[h]
  \centering
  \includegraphics[width=0.7\linewidth]{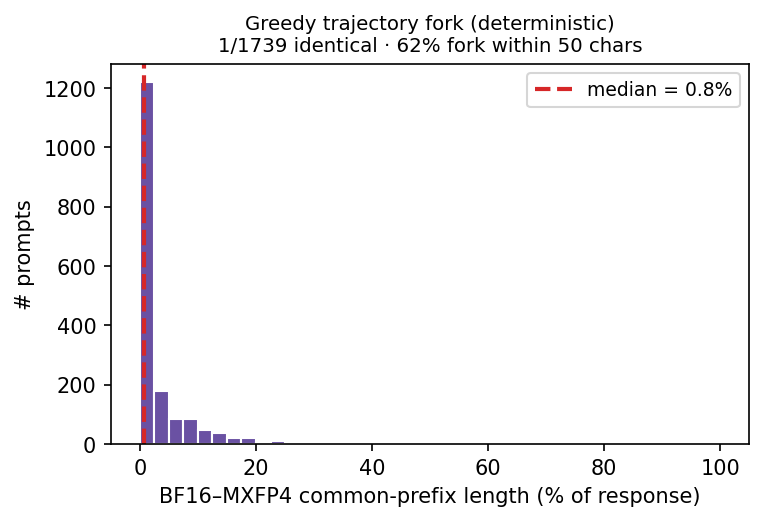}
  \caption{Greedy trajectory fork (deterministic decoding, no sampling noise; $1{,}739$ DAPO-test prompts, shared base checkpoint). Distribution of BF16-vs-MXFP4 common-prefix length: trajectories diverge early and are near-disjoint. Because greedy decoding is deterministic, every divergence is attributable to the quantization floor alone. On \emph{trained} checkpoints the fork becomes systematically harmful: greedy disagreements split asymmetrically toward BF16-correct/MXFP4-wrong, with MXFP4 also producing several-fold more unparseable / non-terminating outputs.}
  \label{fig:greedy_fork}
\end{figure}

\begin{figure}[h]
  \centering
  \includegraphics[width=0.95\linewidth]{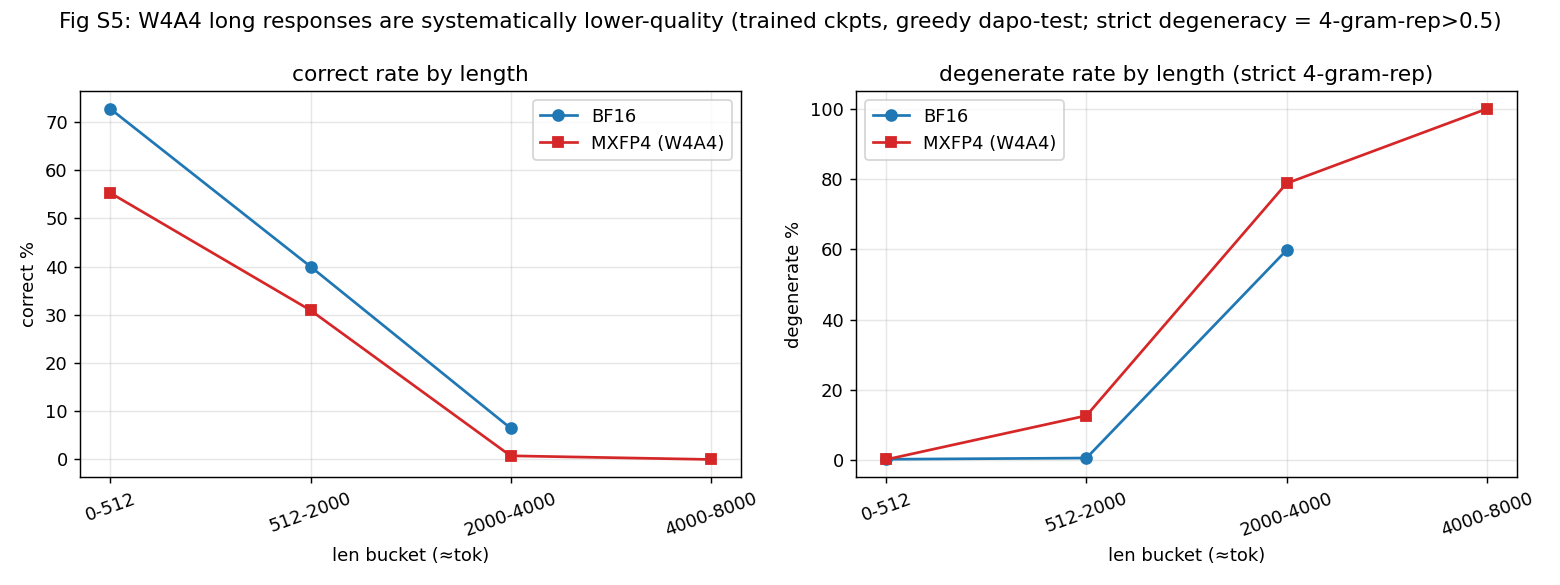}
  \caption{Quality vs.\ response length on the DAPO long-CoT validation set (trained checkpoints, greedy). \emph{Left:} correct rate by length bucket. \emph{Right:} degenerate rate under a strict $4$-gram repetition detector ($\mathrm{rep}>0.5$; excludes unformatted-but-valid outputs and length-capped truncation, which would otherwise inflate the count). W4A4 degenerates earlier and more, and dominates the long tail.}
  \label{fig:quality_by_length}
\end{figure}


\section{Training hyperparameters (MoE)}
\label{app:hyperparams}

We list every hyperparameter explicitly set in our MoE training script
\texttt{train\_moe\_w4a4\_aqn\_rht\_sr.sh}. All other knobs use the
verl-VeRL defaults.

\subsection*{GRPO}

\begin{itemize}\setlength{\itemsep}{1pt}
  \item Advantage estimator: \texttt{grpo}.
  \item \texttt{rollout.n} (rollouts per prompt): $2$.
  \item \texttt{train\_batch\_size} (prompts): $64$.
  \item \texttt{ppo\_mini\_batch\_size} (prompts): $32$.
  \item \texttt{ppo\_micro\_batch\_size\_per\_gpu}: $1$ (training and inference).
  \item \texttt{use\_dynamic\_bsz}: \texttt{True};
        \texttt{ppo\_max\_token\_len\_per\_gpu} $= 2048$.
  \item \texttt{max\_prompt\_length}: $1024$;
        \texttt{max\_response\_length}: $1024$.
  \item KL: \texttt{algorithm.use\_kl\_in\_reward=False},
        \texttt{algorithm.kl\_ctrl.kl\_coef=0.0};
        \texttt{actor.use\_kl\_loss=False}.
  \item PPO clip (token-level, asymmetric):
        $\epsilon_{\rm low}{=}0.2$,
        $\epsilon_{\rm high}{=}0.28$ (DAPO-style clip-higher),
        dual-clip $c{=}10.0$;
        \texttt{loss\_type=ppo\_clip}.
  \item Token-level Truncated Importance Sampling
        (TIS)~\citep{yao2025tis}:
        \texttt{rollout\_is=token},
        \texttt{rollout\_is\_threshold=2.0},
        \texttt{tis\_strategy=fixed},
        \texttt{bypass\_mode=false}.
  \item \texttt{loss\_agg\_mode=token-mean};
        \texttt{entropy\_coeff=0}.
  \item Reward: rule-based GSM8K grader (verl built-in,
        \texttt{reward\_model.enable=False} since rewards are computed
        from the dataset's ground-truth field).
  \item Optimizer:
        \texttt{lr=1e-6},
        \texttt{lr\_warmup\_steps=10},
        \texttt{lr\_decay\_style=constant},
        \texttt{weight\_decay=0.1},
        \texttt{clip\_grad=1.0}.
  \item \texttt{trainer.total\_epochs=1};
        \texttt{trainer.val\_before\_train=True};
        \texttt{trainer.test\_freq=10};
        \texttt{trainer.save\_freq=300}.
  \item Single seed per configuration; statistical significance
        limitation acknowledged in Section~\ref{sec:limitations}.
\end{itemize}

\subsection*{Rollout}

\begin{itemize}\setlength{\itemsep}{1pt}
  \item Engine: vLLM (\texttt{rollout.name=vllm});
        \texttt{enforce\_eager=True}; \texttt{free\_cache\_engine=True};
        \texttt{enable\_chunked\_prefill=True};
        \texttt{max\_num\_batched\_tokens=8192}.
  \item \texttt{tensor\_model\_parallel\_size=8} for inference.
  \item \texttt{gpu\_memory\_utilization=0.75}.
  \item Training rollout: \texttt{temperature=1.0},
        \texttt{top\_p=1.0}, \texttt{top\_k=-1}.
  \item Validation rollout (\texttt{val\_kwargs}):
        \texttt{temperature=1.0}, \texttt{top\_p=1.0},
        \texttt{top\_k=-1}, \texttt{do\_sample=True}, \texttt{n=1}.
  \item \texttt{calculate\_log\_probs=True};
        \texttt{enable\_rollout\_routing\_replay=True} (MoE router
        cached from rollout to actor).
\end{itemize}

\subsection*{Quantization (MXFP4 W4A4)}

\begin{itemize}\setlength{\itemsep}{1pt}
  \item Format: MXFP4 (E2M1 grid + E8M0 power-of-two block scale)
        \citep{rouhani2023ocp}; W4A4 — both weight and activation
        tensors entering each matmul are quantized; gradients remain
        BF16.
  \item Block size: $32$ along the input axis of each Linear.
  \item Quantizer path: native CUDA kernel
        (\texttt{actor.force\_py=False}); inference uses the same
        kernel via \texttt{rollout.prequant\_format=mxfp4}.
\end{itemize}

\subsection*{Corrections}

\paragraph{Macro Block Scaling (MBS)}~\citep{chhugani2026mxfp4}.
Macro-block size $128$ ($= 4$ contiguous MXFP4 sub-blocks of $32$);
$8$-bit mantissa correction (E0M8, $256$ levels) per macro-block;
applied identically to weights on both OF passes when MBS$+$OF is
enabled.

\paragraph{Outlier Fallback (OF)}~\citep{zhang2025fallback}.
Two-pass residual MXFP4 QDQ with $\alpha = 1.0$:
$q_1 = \mathrm{MXFP4}(\bw)$,
$q_2 = \mathrm{MXFP4}(\bw - q_1)$,
output $q_1 + q_2$.

\paragraph{Adaptive Quantization Noise (AQN)} ~\cite{huang2025qerl}.
Per-rollout Gaussian additive noise injection on weights with
exponentially decaying magnitude:
\begin{itemize}\setlength{\itemsep}{1pt}
  \item \texttt{trainer.aqn.enabled=True}.
  \item \texttt{sigma\_start=0.01}.
  \item \texttt{sigma\_end=0.001}.
  \item \texttt{num\_stages=10}.
  \item \texttt{target\_patterns=['layernorm']}.
  \item \texttt{target\_multipliers.post\_attention\_layernorm=1.414}
        (i.e.\ $\sqrt{2}$, compensating for the magnitude doubling at
        the residual-stream branch).
\end{itemize}

\subsection*{System / parallelism}

\begin{itemize}\setlength{\itemsep}{1pt}
  \item Model: \texttt{Qwen3-30B-A3B-Base}.
  \item Backend: Megatron-Core (via verl-VeRL,
        \texttt{config-name=ppo\_megatron\_trainer}); custom
        \texttt{layers.py} (the ACS replacement) provides MXFP4 + RHT
        + SR for attention matmuls; \texttt{te\_grouped\_quant\_patch.py}
        provides MXFP4 for grouped MoE GEMMs.
  \item Parallelism (training): \texttt{TP=1}, \texttt{PP=1},
        \texttt{VPP=null}, \texttt{CP=1}, \texttt{EP=8},
        \texttt{ETP=1}.
  \item Megatron MoE knobs: \texttt{moe\_grouped\_gemm=True},
        \texttt{moe\_token\_dispatcher\_type=flex},
        \texttt{moe\_router\_dtype=fp32},
        \texttt{moe\_enable\_deepep=True},
        \texttt{moe\_permute\_fusion=False}.
  \item Memory: \texttt{param\_offload=True},
        \texttt{grad\_offload=True},
        \texttt{optimizer\_offload=True}
        (\texttt{optimizer\_offload\_fraction=1.0});
        \texttt{use\_precision\_aware\_optimizer=True};
        gradient checkpointing enabled.
  \item Checkpointing: \texttt{use\_mbridge=True},
        \texttt{use\_dist\_checkpointing=False}.
  \item Distributed setup: \texttt{trainer.n\_gpus\_per\_node=8},
        \texttt{trainer.nnodes=1}.
  \item Router replay: \texttt{router\_replay.mode=R3} (replay rollout
        routing decisions during the actor update).
  \item LoRA: disabled (\texttt{lora.rank=0}; full-parameter training).
\end{itemize}

\subsection*{Data}

\begin{itemize}\setlength{\itemsep}{1pt}
  \item Training: GSM8K \texttt{train.parquet} ($7{,}473$ problems).
  \item Validation: GSM8K \texttt{test.parquet} ($1{,}319$ problems).
  \item Preprocessing: parquet via \texttt{verl.data\_preprocess.gsm8k};
        prompt key \texttt{prompt}; truncation mode \texttt{error}
        (longer-than-budget prompts abort the job rather than silently
        truncate).
\end{itemize}


\end{document}